\title{Wide and Deep Neural Networks \\Achieve Optimality for Classification}
\author{Adityanarayanan Radhakrishnan\thanks{Laboratory for Information \& Decision Systems, and 
 Institute for Data, Systems, and Society, 
 Massachusetts Institute of Technology} $~~$  Mikhail Belkin \thanks{
Halıcıoğlu Data Science Institute, University of California, San Diego} $~~$ Caroline Uhler\textsuperscript{\specificthanks{1},}\thanks{Broad Institute of MIT and Harvard}  }
\date{\today}
\begin{document}

\maketitle

\begin{abstract}
While neural networks are used for classification tasks across domains, a long-standing open problem in machine learning is determining whether neural networks trained using standard procedures are optimal for classification, i.e., whether such models minimize the probability of misclassification for arbitrary data distributions.  In this work, we identify and construct an explicit set of neural network classifiers that achieve optimality. Since effective neural networks in practice are typically both wide and deep, we analyze infinitely wide networks that are also infinitely deep. In particular, using the recent connection between infinitely wide neural networks and Neural Tangent Kernels, we provide explicit activation functions that can be used to construct networks that achieve optimality. Interestingly, these activation functions are simple and easy to implement, yet differ from commonly used activations such as ReLU or sigmoid.  More generally, we create a taxonomy of infinitely wide and deep networks and show that these models implement one of three well-known classifiers depending on the activation function used: (1) 1-nearest neighbor (model predictions are given by the label of the nearest training example); (2) majority vote (model predictions are given by the label of the class with greatest representation in the training set); or (3) singular kernel classifiers (a set of classifiers containing those that achieve optimality).  Our results highlight the benefit of using deep networks for classification tasks, in contrast to regression tasks, where excessive depth is harmful. 
\end{abstract}



\section{Introduction}

Deep learning has produced state-of-the-art results across several application domains including computer vision~\cite{ResNet}, natural language processing~\cite{GPT3}, and biology~\cite{Alphafold}.  Despite these empirical successes, our understanding of basic theoretical properties of deep networks is far from satisfactory.  In fact,  for the fundamental problem of classification it has not been established  whether neural networks trained with standard optimization methods can achieve optimality, i.e., whether they minimize the probability of misclassification for arbitrary data distributions (a property referred to as \emph{Bayes optimality} or \emph{consistency} in the statistics literature).

There is a vast literature on the optimality of statistical machine learning methods; in particular,  given the modern practice of using models that can interpolate (i.e., fit the training data exactly), recent works analyzed the optimality of interpolating machine learning models including weighted nearest neighbor methods and kernel smoothers (also known as Nadaraya-Watson estimators)~\cite{CoverHart1NN, BelkinKNN, RakhlinInterpolationStatisticalOptimality, RakhlinLaplace, DevroyeHilbertKernel}. However, little is known about deep neural networks.  Classical work~\cite{LugosiConsistency} analyzing the optimality of neural networks utilizes the results of Cybenko~\cite{CybenkoApproximation} and Hornik~\cite{HornikApproximation} to show that the optimal classifier can be approximated by a neural network that is sufficiently wide; i.e., these prior results are concerned with the existence of networks that achieve optimality and do not present computationally feasible algorithms for finding such networks. 

By establishing a connection between interpolating kernel smoothers and deep neural networks, we identify and construct an explicit class of neural networks that, when trained with gradient descent, achieve optimality for classification problems.  Our results utilize the recent Neural Tangent Kernel (NTK) connection between training wide neural networks and using kernel methods.  Several works~\cite{NTKJacot, PenningtonNeuralNetLinearModels, BelkinNTKLinear, BelkinLossLandscapesPL} established conditions under which using a kernel method with the NTK is equivalent to training neural networks, as network width approaches infinity.  Given the conceptual simplicity of kernel methods, the NTK has been widely used as a tool for understanding the theoretical properties of neural networks~\cite{ResNetNTK, PenningtonNeuralNetLinearModels, BelkinLossLandscapesPL,  SimpleFastFlexibleMatrixCompletion, PenningtonEdgeofChaos}. Since neural networks in practice are often both wide and deep, we consider the natural extension of 
networks that are both infinitely wide and deep.  

In particular, we focus on infinitely wide and deep networks in the classification setting and show that they have markedly different behavior than in the regression setting.  Indeed, prior work~\cite{JudithNTK, ResNetNTK} showed that in the regression setting, infinitely wide and deep neural networks simply predict near-zero values at all test samples and thus, are far from optimal (see Fig.~\ref{fig: Overview Schematic}b).  As a consequence, these models were dismissed as an approach for explaining the strong performance of deep networks in practice. 
 In stark contrast to regression,  we show that the sign of the predictor can be informative even when the its numerical output is arbitrarily close to zero (see Fig.~\ref{fig: Overview Schematic}b for an illustration).  In fact, as we show in this work, this is exactly how infinitely wide and deep neural networks can achieve optimal classification accuracy even though the output of the network approaches zero.

\begin{figure*}[!t]
    \centering
    \includegraphics[width=\textwidth]{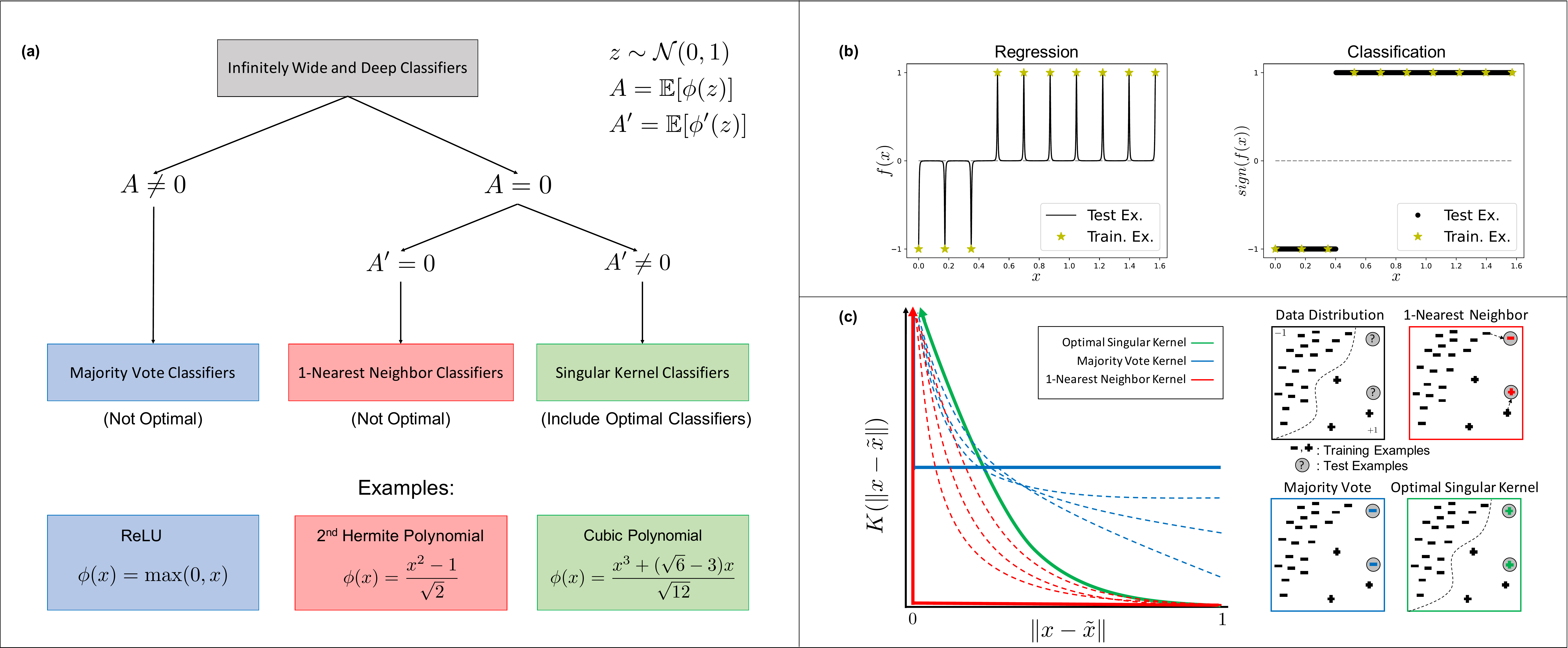}
    \caption{Behavior of infinitely wide and deep neural networks trained with gradient descent.  (a) Taxonomy of infinitely wide and deep networks.  Depending on the choice of the activation function, $\phi(\cdot)$, these models implement majority vote (blue), 1-nearest neighbor (red), or singular kernel classifiers (green), a subset of which achieve optimality. (b) Regression versus classification using infinitely wide and deep networks. While these models are not effective in the regression setting, since their predictions are near zero almost everywhere, they can achieve optimality for classification, where only the sign of the prediction matters. (c) Illustration of the different behaviors of infinitely wide and deep networks for varying activation functions.  Depending on the activation function, infinitely wide and deep networks implement majority vote (blue), 1-nearest neighbor (red), or singular kernel classifiers that can achieve optimality (green). Singular kernels that grow too slowly are akin to majority vote classifiers (dashed blue), whereas those that grow too quickly are akin to weighted nearest neighbor classifiers (dashed red).}
    \label{fig: Overview Schematic}
\end{figure*}

To characterize the behavior of infinitely wide and deep classifiers, we establish a taxonomy of such models, and we prove that it includes networks that achieve optimality (see Fig.~\ref{fig: Overview Schematic}a).  More precisely, we prove that infinitely wide and deep neural network classifiers implement one of the following three well-known classifiers depending on the choice of activation function:
\begin{enumerate}
    \item \emph{1-nearest neighbor (1-NN) classifiers}: the prediction on a new sample is the label of the nearest sample (under Euclidean distance) in the training set. 
    \item \emph{Majority vote classifiers:} the prediction on a new sample is the label of the class with greater representation in the training set. 
    \item \emph{Singular kernel classifiers:} the prediction on a new sample is obtained by using the kernel $K(x, \tilde{x}) = \frac{R(\|x - \tilde{x}\|)}{\|x - \tilde{x}\|^{\alpha}}$ where $\alpha > 0$ is the order of the singularity.\footnote{For this order to be well-defined, $R(\cdot)$ is non-negative and satisfies $\inf\limits_{|u| < \epsilon} R(u) > 0$ and $ |R(u)| < C$ for some $\epsilon, C > 0$.}  As is standard when using kernel smoothers for classification, the prediction, $m(x)$, on a new sample $x$ given training data $\{(x^{(i)}, y^{(i)})\}_{i=1}^{n}$ is
    \begin{align}
    \label{eq: Kernel Smoother}
        m(x) = \sign\Big(\sum_{i=1}^{n} y^{(i)} K(x^{(i)}, x) \Big). 
    \end{align}
\end{enumerate}


As a corollary of a result in~\cite{DevroyeHilbertKernel} it follows that singular kernel classifiers achieve optimality when $\alpha$ is the dimension of the data, $d$ (see Supplementary Information \ref{appendix: C}). Hence our taxonomy and in particular Theorem \ref{theorem: Optimality of the NTK} of this work provide exact conditions when infinitely wide and deep neural network classifiers achieve optimality for any given data dimension.  Notably, we identify a simple class of activation functions that yield singular kernel classifiers with $\alpha = d$, and we thus identify concrete examples of neural networks that achieve optimality. For example, for $d = 2$, the infinitely wide and deep classifier with activation function $\phi(x) = (x^3 + (\sqrt{6} - 3)x)/\sqrt{12}$ achieves optimality. Interestingly, the popular rectified linear unit (ReLU) activation $\phi(x) = \max(x, 0)$ leads to an infinitely wide and deep classifier that implements the majority vote classifier and is thus not optimal. Similarly, the activation function $\phi(x) = (x^2 - 1)/\sqrt{2}$ leads to an infinitely wide and deep classifier that implements the 1-NN classifier and is thus also not optimal. 

We note that singular kernels provide a natural transition between 1-NN and majority vote classifiers.  Namely, as discussed in~\cite{DevroyeHilbertKernel}, for $\alpha > d$, singular kernel classifiers behave akin to weighted nearest neighbor classifiers since $\|x - \tilde{x}\|^{\alpha}$ is extremely small for $\tilde{x}$ near $x$.  Similarly, for $\alpha < d$, singular kernel classifiers behave akin to majority vote classifiers since $\|x - \tilde{x}\|^{\alpha}$ is no longer small for $\tilde{x}$ far from $x$.  We visualize this transition between the three classes established in our taxonomy in Fig.~\ref{fig: Overview Schematic}c.  




\section{Taxonomy of Infinitely Wide and Deep Neural Networks}
\label{sec: Taxonomy of Inf Networks}

In the following, we construct a taxonomy of classifiers implemented by infinitely wide and deep neural networks. Our construction relies on the recent connection between infinitely wide neural networks and kernel methods~\cite{NTKJacot}.  In particular, this connection involves utilizing a kernel method known as a kernel machine, which is related to the kernel smoother described in Eq.~\eqref{eq: Kernel Smoother}. In contrast to the kernel smoother, a kernel machine with kernel $K$
is given by: 
\begin{align}
\label{eq: Kernel Machine with NTK}
\sign\left(y (K_n)^{-1} K(X, x)\right), 
\end{align}
where $X = [ x^{(1)} | x^{(2)} | \ldots | x^{(n)}] \in \mathbb{R}^{d \times n}$ denotes the training data, $y = [y^{(1)}, y^{(2)}, \ldots y^{(n)}] \in \{-1, 1\}^{1 \times n}$ the labels, $K_n \in \mathbb{R}^{n \times n}$ satisfies $(K_n)_{i,j} = K(x^{(i)}, x^{(j)})$ and $K(X, x) \in \mathbb{R}^{n}$ satisfies $\left(K(X, x)\right)_{i} = K(x^{(i)}, x)$. Both kernel methods can be used as prediction schemes for classification~\cite{KernelsBook}. Note that while both algorithms produce predictors with  the same functional form, their predictions are generally different. 
Indeed, understanding the relation between kernel smoothers and kernel machines will be critical to our proof of optimality.  

Under certain conditions, training a neural network as width approaches infinity is equivalent\footnote{This equivalence requires a particular initialization scheme on the weights known as the NTK initialization scheme \cite{NTKJacot}.  Formally, this equivalence holds when an offset term corresponding to the predictions of the neural network at initialization are added to those given by the using a kernel machine with the NTK~\cite{NTKJacot}. Like in prior works (e.g.~\cite{JudithNTK, CNTKArora, FiniteVsInfiniteNeuralNetworks, SimpleFastFlexibleMatrixCompletion, ResNetNTK}), we will analyze the NTK without such offset. This model corresponds to averaging the predictions of infinitely many infinite width neural networks~\cite{NeuralTangentsGoogle}.} to using a kernel machine with a specific kernel known as the Neural Tangent Kernel~\cite{NTKJacot}, which is defined below.



\begin{definition}
Let $f^{(L)}(x ; \mathbf{W})$ denote a fully connected network\footnote{Throughout this work, we consider fully connected networks that have no bias terms.} with $L$ hidden layers with parameters $\mathbf{W}$ operating on data $x \in \mathbb{R}^{d}$.  For $x, \tilde{x} \in \mathbb{R}^{d}$, the \textbf{Neural Tangent Kernel} (\textbf{NTK}) is given by: 
\begin{align*}
    K^{(L)}(x, \tilde{x}) = \langle \nabla_{\mathbf{W}} f^{(L)}(x ; \mathbf{W}), \nabla_{\mathbf{W}} f^{(L)}(\tilde{x} ; \mathbf{W}) \rangle~.
\end{align*}
\end{definition}

To work with a simple closed form for the NTK and to avoid symmetries arising from the activation function, we will consider training data with density on $\mathcal{S}_+^{d}$, where $\mathcal{S}_+^{d}$ is the intersection of the unit sphere $\mathcal{S}^{d}$ in $d+1$ dimensions and the non-negative orthant.\footnote{For example, min-max scaling followed by projection onto the sphere results in the data lying in this region.}

In this work, we analyze the behavior of infinitely wide and deep networks by analyzing the kernel machine in Eq.~\eqref{eq: Kernel Machine with NTK}, as depth, $L$, goes to infinity.  To perform our analysis, we utilize the recursive formula for the NTK of a deep network originally presented in~\cite{NTKJacot}. Namely, $K^{(L)}$ can be expressed as a function of $K^{(L-1)}$ and the network activation function, $\phi(\cdot)$, yielding a discrete dynamical system indexed by $L$.  The exact formula can be found in Eq.~\eqref{eq: NTK Recursive Formula}, and additional relevant results from prior works that are used in our proofs are referenced in Supplementary Information \ref{appendix: A}.

Remarkably, the properties of the resulting dynamical system as $L \to \infty$ are governed by the mean of $\phi(z)$ and its derivative, $\phi'(z)$, for $z \sim \mathcal{N}(0, 1)$. For simplicity, we will assume throughout that $\mathbb{E}[\phi(z)^2]<\infty$ and similarly $\mathbb{E}[\phi'(z)^2] < \infty$, an assumption that holds for many activation functions used in practice including ReLU, leaky ReLU, sigmoid, sinusoids, and polynomials. By defining $A = \mathbb{E}[\phi(z)]$ and $A' = \mathbb{E}[\phi'(z)]$, we break down our analysis into the following three cases: 
\begin{align*}
    &\text{Case 1:} ~~ A = 0  ~,~ A' \neq 0 ~,\\ 
    &\text{Case 2:} ~~ A = 0  ~,~ A' = 0 ~,\\ 
    &\text{Case 3:} ~~ A \neq 0  ~. 
\end{align*}

Under cases 1 and 2, $0$ is the unique fixed point attractor of the recurrence for $K^{(L)}$ and thus $K^{(L)}(x, \tilde{x}) \to 0$ as $L \to \infty$ for $x \neq \tilde{x}$. As a consequence, cases 1 and 2 lead to infinitely wide and deep neural networks that predict 0 almost everywhere. Thus, these networks are far from optimal in the regression setting and were thus dismissed as an approach for explaining the strong performance of deep networks.  On the other hand, case 3 yields nonzero values for any pair of examples and thus, prior works that analyzed the regression setting~\cite{JudithNTK, ResNetNTK} focused on activation functions satisfying case 3.

In stark contrast to the regression setting, we will show that infinitely wide and deep networks with activation functions satisfying case 1 are effective for classification, with a subset achieving optimality.  In particular, we will show that networks in case 1 implement singular kernel classifiers while those in case 2 implement 1-NN classifiers. Notably, we will identify conditions and provide explicit examples of activation functions in case 1 that guarantee optimality.  We will then show that infinitely wide and deep classifiers with activations satisfying case 3 generally correspond to majority vote classifiers.  A summary of our taxonomy is presented in Fig.~\ref{fig: Overview Schematic}a, and we will now discuss each of the three cases in more depth.

\subsection*{Case 1 ($A = 0, A' \neq 0$) networks implement singular kernel classifiers and can achieve optimality.}
\label{sec: Optimal NTK}


We establish conditions on the activation function under which an infinitely wide and deep network implements a singular kernel classifier (Theorem~\ref{theorem: NTK singular kernel}).  We then utilize results of~\cite{DevroyeHilbertKernel} to show that this set of classifiers contains those that achieve optimality for any given data dimension.  Lastly, we will present explicit activation functions that lead to infinitely wide and deep classifiers that achieve optimality. We begin with the following theorem, which establishes conditions under which the infinite depth limit of the NTK is a singular kernel.

\begin{theorem}
\label{theorem: NTK singular kernel}
Let $K^{(L)}$ denote the NTK of a fully connected neural network with $L$ hidden layers and activation function $\phi(\cdot)$.  For $z \sim \mathcal{N}(0, 1)$, define $A = \mathbb{E}[\phi(z)]$, $A' = \mathbb{E}[\phi'(z)]$, and $B' = \mathbb{E}[\phi'(z)^2]$.  If $A = 0$ and $A' \neq 0$, then for $x, \tilde{x} \in \mathcal{S}_+^{d}$: 
\begin{align*}
    \lim\limits_{L \to \infty} \frac{K^{(L)}(x, \tilde{x})}{{(A')}^{2L} (L+1)} = \frac{R(\|x - \tilde{x}\|)}{\|x - \tilde{x}\|^{\alpha}}, 
\end{align*}
where $\alpha = -2\frac{\log(A'^2)}{\log\left(B'\right)}$ and $R(\cdot)$ is non-negative, bounded from above, and bounded away from $0$ around $0$. 
\end{theorem}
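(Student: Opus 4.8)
The plan is to reduce the theorem to the scalar discrete dynamical system that governs the NTK on $\mathcal{S}_+^{d}$, and then read off both the layer-wise decay rate and the spatial singularity from it. On $\mathcal{S}_+^{d}$ everything depends only on $u := \langle x,\tilde{x}\rangle = 1 - \tfrac12\|x-\tilde{x}\|^2 \in[0,1]$. After the harmless normalization $\mathbb{E}[\phi(z)^2]=1$ (which makes the diagonal of the recursion satisfy $\Sigma^{(h)}(x,x)\equiv 1$), the recursion for the NTK gives $\Sigma^{(h)}(x,\tilde{x}) = F(\Sigma^{(h-1)}(x,\tilde{x}))$ and $\dot{\Sigma}^{(h)}(x,\tilde{x}) = \dot{F}(\Sigma^{(h-1)}(x,\tilde{x}))$ with $\Sigma^{(0)}(x,\tilde{x}) = u$, where $F(u)$ (resp.\ $\dot{F}(u)$) is the covariance of $\phi(a),\phi(b)$ (resp.\ of $\phi'(a),\phi'(b)$) for a correlation-$u$ bivariate standard Gaussian, and $K^{(L)}(x,\tilde{x}) = \sum_{h}\Sigma^{(h)}(x,\tilde{x})\prod_{h'>h}\dot{\Sigma}^{(h')}(x,\tilde{x})$. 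Expanding $\phi$ in the (normalized, constant-term-free since $A=0$) Hermite basis gives $F(u)=\sum_{k\ge1}c_k^2 u^k$ and $\dot{F}(u)=\sum_{k\ge1}k c_k^2 u^{k-1}$ with $c_1=A'$ and $\sum_{k\ge1}k c_k^2 = B'$; hence $F,\dot{F}$ are non-negative and increasing, $F$ is convex on $[0,1]$, $F(0)=0$, $F(1)=1$, $F'(0)=\dot{F}(0)=(A')^2$, $F'(1)=\dot{F}(1)=B'$, and — since $\alpha$ being a well-defined positive number forces $\phi$ to be non-linear — $(A')^2 < 1 < B'$.

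\textbf{Step 1 (reduce to a single scalar limit).} Write $u^{(h)}:=\Sigma^{(h)}(x,\tilde{x})$, $v^{(h)}:=u^{(h)}/(A')^{2h}$ and $Q_{h,L}:=\prod_{h'=h+1}^{L}\dot{\Sigma}^{(h')}(x,\tilde{x})/(A')^2$, so the $h$-th NTK term equals $(A')^{2L} v^{(h)} Q_{h,L}$ and $K^{(L)}/(A')^{2L}=\sum_h v^{(h)} Q_{h,L}$. Convexity of $F$ through $(0,0),(1,1)$ with slope $(A')^2<1$ at $0$ gives $F(u)<u$ on $(0,1)$, so $u^{(h)}\downarrow 0$; then $F(u)=(A')^2 u(1+O(u))$ makes $v^{(h)}$ converge to a finite $C(u)>0$, while $\dot{\Sigma}^{(h')}/(A')^2 = 1+O(u^{(h'-1)})$ together with $\sum_h u^{(h)}<\infty$ makes $Q_{h,L}$ uniformly bounded in $L$ and $Q_{h,L}\to 1$ as $h\to\infty$ uniformly in $L\ge h$. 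A Ces\`aro argument (the finitely many early terms contribute $O(1)$, the remaining $L+1-O(1)$ terms are each $C(u)+o(1)$) then gives $\lim_{L\to\infty}\frac{K^{(L)}(x,\tilde{x})}{(A')^{2L}(L+1)} = C(u) = \lim_{h\to\infty}\frac{\Sigma^{(h)}(x,\tilde{x})}{(A')^{2h}}$, so the whole theorem is reduced to understanding the scalar quantity $C(u)$.

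\textbf{Step 2 (identify the singularity of $C$).} I claim $C(u) = R(\|x-\tilde{x}\|)/\|x-\tilde{x}\|^{\alpha}$ with $R$ non-negative, bounded, and bounded away from $0$ near $0$. Non-negativity is immediate; $R(t) = C(1-t^2/2)\,t^{\alpha}$ is bounded for $t$ bounded away from $0$ because $C$ is non-decreasing in $u$ (a limit of the increasing maps $u\mapsto u^{(h)}$), and $R$ may genuinely vanish ($R(\sqrt2)=0$ since orthogonal $x,\tilde{x}$ force $C=0$), which is why only $R\ge0$ is asserted. The heart is the behavior as $\tilde{x}\to x$, i.e.\ $\delta := 1-u = \tfrac12\|x-\tilde{x}\|^2\to 0^{+}$. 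I would split the orbit into (i) a ``near-$1$'' phase, where $\epsilon^{(h)}:=1-u^{(h)}$ obeys $\epsilon^{(h)}=G(\epsilon^{(h-1)})$ with $G(\epsilon)=B'\epsilon(1-O(\epsilon))$, a repelling fixed point at $0$ with multiplier $B'>1$; (ii) a middle phase of $O(1)$ length, on which $F$ acts as a fixed contraction bounded away from the identity; and (iii) a ``near-$0$'' phase with $u^{(h)}\sim(A')^{2h}$. Using $G(\epsilon)=B'\epsilon(1-O(\epsilon))$ and the summability $\sum_{\text{near-}1}\epsilon^{(h)}=O(\epsilon_0)$, the escape time $h^{\ast}$ to leave a fixed neighborhood of $1$ satisfies $h^{\ast}=\frac{\log(1/\delta)}{\log B'}+O(1)$ with the $O(1)$ uniform in $\delta$, while the product over phase (iii) contributes only a bounded factor. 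Hence $C(u) = (A')^{-2h^{\ast}}\cdot\Theta(1) = \delta^{-\alpha/2}\cdot\Theta(1)$ (using $\alpha/2 = \log(1/(A')^2)/\log B'$), i.e.\ $C(u)\,\|x-\tilde{x}\|^{\alpha}=\Theta(1)$ as $\|x-\tilde{x}\|\to0$, which is exactly the claim with $R(t):=C(1-t^2/2)\,t^{\alpha}$.

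The main obstacle is the quantitative transient analysis in Step 2: pinning $h^{\ast}$ down to additive $O(1)$ precision uniformly in $\delta$ (a naive concavity bound on $G$ only localizes $h^{\ast}$ to a window of size $\Theta(\log(1/\delta))$, which would produce the wrong exponent), and gluing the estimates from the repelling fixed point at $u=1$ through the middle phase to the attracting fixed point at $u=0$ so that the multiplicative constants hidden in $\Theta(1)$ are genuinely bounded both above and below — the lower bound being precisely what forces $R$ to be bounded away from $0$. A secondary point is keeping the Ces\`aro argument honest given that the convergence $v^{(h)}Q_{h,L}\to C(u)$ is not uniform in $u$ (its rate degrades as $u\to1$); this is harmless for the pointwise limit claimed here but must be handled with care.
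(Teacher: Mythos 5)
Your overall strategy is the same as the paper's: reduce the NTK limit to the scalar dynamical system $u^{(h+1)}=\check{\phi}(u^{(h)})$ for the iterated dual activation, identify the normalization $(A')^{2L}(L+1)$, and extract the singularity exponent from the escape time of the orbit away from the repelling fixed point at correlation $u=1$, whose multiplier is $B'=\check{\phi}'(1)$. Your Step 1 is correct and is a genuinely cleaner route than the paper's: the paper proves $\lim_L K^{(L)}/((A')^{2L}(L+1))=\lim_L\check{\phi}^{(L)}/(A')^{2L}$ by constructing a surrogate kernel $k_\epsilon^{(L)}$ and running a fairly involved double induction to control $K^{(L)}-k_\epsilon^{(L)}$, whereas your Ces\`aro argument on the closed form $K^{(L)}=\sum_i\check{\phi}^{(i)}\prod_{j\ge i}\check{\phi}'(\check{\phi}^{(j)})$ (each term converging to $\psi(u)$, the tail products uniformly close to $1$ because $\sum_h u^{(h)}<\infty$) achieves the same reduction in a few lines.

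The gap is where you say it is, in Step 2, and it is genuine as written. Your escape-time estimate $h^{*}=\log(1/\delta)/\log B'+O(1)$ rests on the expansion $G(\epsilon)=B'\epsilon(1-O(\epsilon))$ near the fixed point at $u=1$; that expansion requires a bounded second derivative of $\check{\phi}$ at $1$, i.e.\ $\sum_k k^2 a_k<\infty$ in the power-series coefficients of the dual activation, which does \emph{not} follow from the hypothesis $B'=\check{\phi}'(1)=\sum_k k a_k<\infty$. Convexity alone gives only the one-sided bound $G(\epsilon)\le B'\epsilon$ together with $G(\epsilon)\ge(B'-o(1))\epsilon$, and accumulating an $o(1)$ (rather than $O(\epsilon)$) multiplicative error over the $\Theta(\log(1/\delta))$ steps of the near-$1$ phase localizes $h^{*}$ only to multiplicative $(1+o(1))$ precision; that yields $C(u)=\delta^{-\alpha/2+o(1)}$ and loses exactly the two-sided boundedness of $R$ that the theorem asserts. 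The paper's device for closing this is to sandwich $\check{\phi}$ near $u=1$ between maps that are \emph{exactly} linear there --- the tangent line at $1$ of slope $B'$ (a lower bound, by convexity) and secant lines of slope $B'-\hat{\epsilon}$ (upper bounds) --- for which the escape time is computed in closed form up to a ceiling (its lemma on functions linear around $1$), letting $\hat{\epsilon}\to0$ at the end. To complete your argument you would need either this sandwich, an added hypothesis controlling $\check{\phi}''(1)$, or another mechanism that pins $h^{*}$ down to additive $O(1)$ uniformly in $\delta$.
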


The full proof is presented in Supplementary Information \ref{appendix: B}, and we outline its key steps in Section \ref{sec: Proof Sketch}.  Theorem \ref{theorem: Optimality of the NTK} below characterizes the activation functions for which the infinitely wide and deep network achieves optimality. In particular, we establish the optimality of the classifier, $m_n(\cdot)$, given by taking the limit as $L \to \infty$ of the kernel machine in Eq.~\eqref{eq: Kernel Machine with NTK} with $K = K^{(L)}$, i.e.
\begin{align}
    \label{eq: NTK Classifier}
    m_n(x)  = \lim_{L \to \infty} \sign\Big( y {\big(K_n^{{(L)}}\big)^{-1}} K^{(L)}(X, x)\Big).
\end{align}

\begin{theorem}
\label{theorem: Optimality of the NTK}
Let $m_n$ denote the classifier in Eq.~\eqref{eq: NTK Classifier} corresponding to training an infinitely wide and deep network with activation function $\phi(\cdot)$ on $n$ training examples.  For $z \sim \mathcal{N}(0, 1)$, define $A = \mathbb{E}[\phi(z)]$, $A' = \mathbb{E}[\phi'(z)]$, and $B' = \mathbb{E}[\phi'(z)^2]$. If
$$A = 0 \quad\textrm{and} \quad  A' \neq 0 \quad\textrm{and}\quad -\frac{\log({A'}^2)}{\log\left(B'\right)} = \frac{d}{2},$$
then this classifier is \emph{Bayes optimal}.\footnote{Formally, this classifier satisfies that for almost all $x \in \mathcal{S}_+^{d}$ and for any $\epsilon > 0$, 
\begin{align*}
   \lim_{n \to \infty} \mathbb{P}_X\left( \left| m_n(x) -  \argmax\limits_{\tilde{y} \in \{-1, 1\}} \mathbb{P}\left( y = \tilde{y} | x \right) \right| > \epsilon \right) = 0 ~.
\end{align*}}
\end{theorem}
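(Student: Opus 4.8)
The plan is to show that the infinite‑depth limit collapses the kernel \emph{machine} in Eq.~\eqref{eq: NTK Classifier} onto a singular kernel \emph{smoother} of order exactly $d$, and then to invoke the optimality of such smoothers established, as a corollary of~\cite{DevroyeHilbertKernel}, in Supplementary Information~\ref{appendix: C}. Under the hypothesis we have $\alpha := -2\log({A'}^2)/\log(B') = d$, so Theorem~\ref{theorem: NTK singular kernel} gives, for every pair $x \neq \tilde{x}$ in $\mathcal{S}_+^{d}$,
\[
\frac{K^{(L)}(x,\tilde{x})}{{(A')}^{2L}(L+1)} \;\longrightarrow\; \kappa(x,\tilde{x}) \;:=\; \frac{R(\|x - \tilde{x}\|)}{\|x - \tilde{x}\|^{d}} \qquad (L \to \infty),
\]
with $R$ bounded above and bounded away from $0$ near $0$. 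First I would use that the diagonal of the NTK recursion depends on $x$ only through $\|x\|^2 = 1$, so $K^{(L)}(x,x)$ equals a common constant $C_L > 0$ for all $x \in \mathcal{S}_+^{d}$; since $K^{(L)}$ is positive semidefinite, Cauchy--Schwarz gives $|K^{(L)}(x,\tilde{x})| \leq C_L$ for all $x,\tilde{x}$. Because $\kappa(x,\tilde{x}) \to \infty$ as $\tilde{x}\to x$, picking $\tilde x$ close to $x$ within $\mathcal S_+^d$ and taking $\liminf_L$ forces $C_L/\big({(A')}^{2L}(L+1)\big) \to \infty$: the diagonal of $K_n^{(L)}$ dominates its off‑diagonal entries at a strictly faster rate in $L$.

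Next I would normalize $\widehat{K}_n^{(L)} := K_n^{(L)}/C_L$. Its diagonal entries are all $1$, and --- since the $n$ training points are almost surely distinct --- its $(i,j)$ entry for $i \neq j$ equals $\big({(A')}^{2L}(L+1)/C_L\big)\cdot K^{(L)}(x^{(i)},x^{(j)})/\big({(A')}^{2L}(L+1)\big) \to 0 \cdot \kappa(x^{(i)},x^{(j)}) = 0$. Thus $\widehat{K}_n^{(L)} \to I_n$ in operator norm, so it is invertible for all large $L$ and $\big(\widehat{K}_n^{(L)}\big)^{-1} \to I_n$. Rescaling the kernel‑machine argument by the positive factor $C_L/\big({(A')}^{2L}(L+1)\big)$, which does not change its sign, gives
\[
\frac{C_L}{{(A')}^{2L}(L+1)}\, y \big(K_n^{(L)}\big)^{-1} K^{(L)}(X,x) \;=\; y \big(\widehat{K}_n^{(L)}\big)^{-1}\, \frac{K^{(L)}(X,x)}{{(A')}^{2L}(L+1)},
\]
whose right‑hand side converges, as $L\to\infty$, to $y\, I_n\, v(x) = \sum_{i=1}^{n} y^{(i)}\kappa(x^{(i)},x) =: S_n(x)$, where $v(x)_i = \kappa(x^{(i)},x)$ (apply Theorem~\ref{theorem: NTK singular kernel} coordinatewise; note $\big\| \big(\widehat{K}_n^{(L)}\big)^{-1} - I_n\big\|$ times the bounded vector $K^{(L)}(X,x)/\big({(A')}^{2L}(L+1)\big)$ tends to $0$, killing the cross‑term). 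Hence for every $x$ with $S_n(x) \neq 0$ the signs converge and $m_n(x) = \sign(S_n(x))$; since $\kappa \geq 0$ and blows up at each $x^{(i)}$, the set $\{S_n = 0\}$ is a measure‑zero subset of $\mathcal{S}_+^{d}$ (as in~\cite{DevroyeHilbertKernel}), so $m_n(x) = \sign\big(\sum_i y^{(i)}\kappa(x^{(i)},x)\big)$ for almost all $x$.

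This identifies $m_n$, up to a null set, with the singular kernel smoother of Eq.~\eqref{eq: Kernel Smoother} for $K = \kappa$, a singular kernel of order $\alpha = d$. I would then conclude by the corollary of~\cite{DevroyeHilbertKernel} recorded in Supplementary Information~\ref{appendix: C}, which extends Devroye's Hilbert‑kernel consistency theorem from $1/\|u\|^{d}$ to kernels of the form $R(\|u\|)/\|u\|^{d}$ with $R$ bounded and bounded away from $0$ near the origin: writing $\eta(x) = \mathbb{P}(y = 1 \mid x)$, the induced estimate $\widehat{\eta}_n(x) = \sum_i \mathbf{1}[y^{(i)} = 1]\,\kappa(x^{(i)},x) \big/ \sum_i \kappa(x^{(i)},x)$ satisfies $\widehat{\eta}_n(x) \to \eta(x)$ in probability over the draw of $X$ for almost every $x$, and $m_n(x) = \sign(2\widehat{\eta}_n(x) - 1)$; hence $m_n(x) \to \argmax_{\tilde{y}\in\{-1,1\}} \mathbb{P}(y = \tilde{y}\mid x)$ in probability wherever $\eta(x) \neq \tfrac12$, which is exactly the claimed Bayes‑optimality statement.

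The main obstacle is the two‑scale nature of the second step: the kernel machine's numerical output vanishes as $L\to\infty$, so its sign is governed only by the limit \emph{after} dividing by the decaying scale $\big({(A')}^{2L}(L+1)\big)/C_L$, and one must check that the (small) off‑diagonal part of $\big(K_n^{(L)}\big)^{-1}$, which multiplies the (large) entries of $K^{(L)}(X,x)$, contributes only $o(1)$ to this rescaled limit --- the Neumann expansion $\big(\widehat{K}_n^{(L)}\big)^{-1} = I_n - \big(\widehat{K}_n^{(L)} - I_n\big) + O\big(\|\widehat{K}_n^{(L)} - I_n\|^2\big)$ makes this quantitative, but pinning down the rate gap $C_L \gg {(A')}^{2L}(L+1)$ cleanly hinges on the blow‑up of $\kappa$ near the diagonal from Theorem~\ref{theorem: NTK singular kernel}. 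The only other non‑routine ingredient is the extension of Devroye's consistency result to the weighted singular kernel, which is handled separately in Supplementary Information~\ref{appendix: C}.
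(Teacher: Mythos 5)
Your proposal is correct and follows essentially the same route as the paper: Theorem~\ref{theorem: NTK singular kernel} supplies the order-$d$ singular kernel, the normalized Gram matrix is shown to converge to the identity so that the kernel machine collapses onto the singular kernel smoother (this is the paper's Lemma~\ref{lemma: convergence to kernel smoother}), and optimality then follows from the extension of~\cite{DevroyeHilbertKernel} in Supplementary Information~\ref{appendix: C}. Your Cauchy--Schwarz argument for the rate gap $C_L \gg (A')^{2L}(L+1)$ is a slightly different (and cleanly self-contained) way of establishing the diagonal dominance that the paper obtains from $0$ being an attractor of $\check{\phi}$, but the overall decomposition and the key ingredients are identical.
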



While the full proof of Theorem~\ref{theorem: Optimality of the NTK} is presented in Supplementary Information \ref{appendix: B} and \ref{appendix: C}, we outline its key steps in Section~\ref{sec: Proof Sketch}.  In particular, the proof follows by using Theorem~\ref{theorem: NTK singular kernel} above, proving that $m_n$ is a singular kernel classifier, and then using the results of~\cite{DevroyeHilbertKernel}, which establish conditions under which singular kernel estimators achieve optimality.  
The following corollary (proof in Supplementary Information \ref{appendix: D}) presents a concrete class of activation functions that satisfy the conditions of Theorem \ref{theorem: Optimality of the NTK} for any given data dimension $d$.

\begin{corollary}
\label{corollary: example optimal NTK}
Let $m_n$ denote the classifier in Eq.~\eqref{eq: NTK Classifier} corresponding to training an infinitely wide and deep network with activation function  
\begin{align*}
    \phi(x) &= \begin{cases}
    \frac{1}{12 \sqrt{70}} h_7(x) + \frac{1}{\sqrt{2}} x   & \text{if $d = 1$,} \\
    \frac{1}{2^{d/4}} \left(\frac{x^3 - 3x}{\sqrt{6}}\right) + \sqrt{1 - \frac{2}{2^{d/2}}} \left(\frac{x^2 - 1}{\sqrt{2}}\right) +  \frac{1}{2^{d/4}}x & \text{if $d \geq 2$,}  \end{cases}
\end{align*} 
where $h_7(x)$ is the $7$\textsuperscript{th} probabilist's Hermite polynomial.\footnote{For $d = 1$, this activation function can be written in closed form as $\frac{x^7 - 21x^5 + 105x^3 + (12 \sqrt{35} -105)x}{12\sqrt{70}}$.} Then the classifier $m_n$ is Bayes optimal.
\end{corollary}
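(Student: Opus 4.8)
The strategy is a direct reduction to Theorem~\ref{theorem: Optimality of the NTK}: I will verify that each displayed activation function $\phi$ satisfies $A=0$, $A'\neq 0$, and $-\log(A'^2)/\log(B')=d/2$, and then Bayes optimality of $m_n$ is exactly the conclusion of that theorem. All three quantities are most transparently computed by expanding $\phi$ in the orthonormal basis of normalized probabilist's Hermite polynomials $\widetilde{h}_k := h_k/\sqrt{k!}$ of $L^2(\mathcal{N}(0,1))$, so the first step is to carry out this expansion.

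Using $\widetilde{h}_1(x)=x$, $\widetilde{h}_2(x)=(x^2-1)/\sqrt2$, $\widetilde{h}_3(x)=(x^3-3x)/\sqrt6$, together with the identity $\tfrac{1}{12\sqrt{70}}h_7=\tfrac{1}{\sqrt2}\widetilde{h}_7$ (which follows from $7!=5040$ and $(12\sqrt{70})^2=10080=2\cdot 7!$), one obtains
\[
\phi=\tfrac{1}{\sqrt2}\widetilde{h}_1+\tfrac{1}{\sqrt2}\widetilde{h}_7 \ \ (d=1),\qquad
\phi=2^{-d/4}\widetilde{h}_1+\sqrt{1-2^{1-d/2}}\,\widetilde{h}_2+2^{-d/4}\widetilde{h}_3 \ \ (d\geq 2).
\]
Observe that the $\widetilde{h}_2$-coefficient $\sqrt{1-2^{1-d/2}}$ is a well-defined real number precisely for $d\geq 2$, which is the reason the construction splits into two cases. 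Writing $\phi=\sum_k c_k\widetilde{h}_k$, I then read off the needed moments: $A=\mathbb{E}[\phi(z)]=c_0$; and, since $\widetilde{h}_k'=\sqrt k\,\widetilde{h}_{k-1}$ gives $\phi'(z)=\sum_{k\geq 1}\sqrt k\,c_k\,\widetilde{h}_{k-1}(z)$, also $A'=\mathbb{E}[\phi'(z)]=c_1$ and $B'=\mathbb{E}[\phi'(z)^2]=\sum_{k\geq 1}k\,c_k^2$ by Parseval.

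What remains is a short arithmetic check. In both cases $c_0=0$ so $A=0$, and $c_1\neq 0$ so $A'\neq 0$; moreover $\phi$ is a polynomial, so the standing hypotheses $\mathbb{E}[\phi(z)^2],\mathbb{E}[\phi'(z)^2]<\infty$ hold automatically. For $d=1$: $A'^2=c_1^2=\tfrac12$ and $B'=1\cdot\tfrac12+7\cdot\tfrac12=4$, hence $-\log(A'^2)/\log(B')=\log 2/\log 4=\tfrac12=\tfrac d2$. For $d\geq 2$: $A'^2=c_1^2=2^{-d/2}$ and $B'=1\cdot 2^{-d/2}+2(1-2^{1-d/2})+3\cdot 2^{-d/2}=2$ after the $2^{-d/2}$ terms cancel, hence $-\log(A'^2)/\log(B')=(d/2)\log 2/\log 2=\tfrac d2$. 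Thus all hypotheses of Theorem~\ref{theorem: Optimality of the NTK} are satisfied, so $m_n$ is Bayes optimal.

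I do not expect any genuine obstacle here; the work is purely verification, and the only places needing care are the Hermite normalization constants (the factor $12\sqrt{70}$ and the degree-$7$ term for $d=1$) and the requirement that the $\widetilde{h}_2$-coefficient be real, which dictates the $d\geq 2$ split. As a sanity check that also explains the particular coefficients, one can note that $\sum_k c_k^2=\mathbb{E}[\phi(z)^2]=1$ in every case: subject to unit energy, $c_1^2=A'^2$, $\sum_k k\,c_k^2=B'$, and using as few Hermite modes beyond $\widetilde{h}_1$ as possible, the stated $\phi$ is the minimal-degree solution --- for $d=1$ two active modes force degree $7$ (since $\tfrac12+\tfrac m2=4\Rightarrow m=7$), while for $d\geq 2$ a third mode is needed because the two-mode equation would require a non-integer degree.
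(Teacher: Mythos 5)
Your proposal is correct and follows essentially the same route as the paper: the paper expands $\phi$ in normalized Hermite polynomials to get the dual activation $\check{\phi}(z)=\sum_k c_k^2 z^k$ and checks $\check{\phi}'(0)=2^{-d/2}$ (resp.\ $1/2$) and $\check{\phi}'(1)=2$ (resp.\ $4$), which via the identities $(A')^2=\check{\phi}'(0)$ and $B'=\check{\phi}'(1)$ is exactly your computation of $A'^2=c_1^2$ and $B'=\sum_k k\,c_k^2$. The arithmetic and the reduction to Theorem~\ref{theorem: Optimality of the NTK} match the paper's proof in Supplementary Information~\ref{appendix: D}.
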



We note the remarkable simplicity of the above activation functions yielding infinitely wide and deep networks that achieve optimality.  In particular, for $d \geq 2$, these activations are simply cubic polynomials. 




\subsection*{Case 2 ($A = 0, A' = 0$) networks implement 1-NN.}
\label{sec: 1 NN NTK}


We now identify conditions on the activation function under which infinitely wide and deep networks implement the 1-NN classifier. 

\begin{theorem}
\label{theorem: NTK 1NN}
Let $m_n$ denote the classifier in Eq.~\eqref{eq: NTK Classifier} corresponding to training an infinitely wide and deep network with activation function $\phi(\cdot)$ on $n$ training examples.  For $z \sim \mathcal{N}(0, 1)$, define $A = \mathbb{E}[\phi(z)]$ and $A' = \mathbb{E}[\phi'(z)]$. If $A = A' = 0$, then $m_n(x)$ implements 1-NN classification on $\mathcal{S}_+^{d}$.
\end{theorem}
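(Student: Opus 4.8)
The plan is to analyze the recursive dynamical system for $K^{(L)}$ in the regime $A = A' = 0$ and show that, after the correct normalization, the limiting kernel collapses to the $1$-NN indicator. First I would recall the NTK recursion (Eq.~\eqref{eq: NTK Recursive Formula}): on $\mathcal{S}_+^d$ the normalized inner product $u^{(L)}(x,\tilde{x}) = \Sigma^{(L)}(x,\tilde{x})/\sqrt{\Sigma^{(L)}(x,x)\Sigma^{(L)}(\tilde{x},\tilde{x})}$ evolves via the dual activation map $u \mapsto \check{\phi}(u) := \mathbb{E}[\phi(z_1)\phi(z_2)]$ for $(z_1,z_2)$ jointly Gaussian with correlation $u$, and the NTK itself satisfies $K^{(L)} = \Sigma^{(L)} + K^{(L-1)}\cdot \dot{\Sigma}^{(L)}$ where $\dot{\Sigma}^{(L)}$ comes from the dual of $\phi'$. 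The key analytic facts I would establish are: (i) since $A = \mathbb{E}[\phi(z)] = 0$, the dual map $\check\phi$ has $\check\phi(1)$ normalizable to $1$ and $\check\phi(0) = A^2 = 0$, so $0$ is a fixed point; since $A' = \mathbb{E}[\phi'(z)] = 0$, the derivative of $\check\phi$ at $0$ is also $0$ (this is the $B'$-type term with $A'^2$ out front, which vanishes). Hence near $u = 0$ the map is $\check\phi(u) = c_k u^k + o(u^k)$ for some leading Hermite index $k \ge 2$, so iterates starting below $1$ converge to $0$ \emph{super-exponentially} (doubly exponentially when $k \ge 2$).

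The heart of the argument is then comparing convergence rates at the closest training point versus all the others. For a fixed test point $x$, order the training points so that $x^{(1)}$ is the (a.s. unique) nearest neighbor; on $\mathcal{S}_+^d$ this means $u^{(0)}(x, x^{(1)}) = \langle x, x^{(1)}\rangle$ is strictly the largest among $\{\langle x, x^{(i)}\rangle\}$ and strictly less than $1$. Because the recursion $u \mapsto \check\phi(u)$ is monotone increasing on $[0,1]$ with $\check\phi(u) < u$ for $u \in (0,1)$ and a $k$-th order zero at $0$, the gap between the iterate at the nearest pair and the iterate at any other pair \emph{widens} multiplicatively at each step: if $u^{(L)}_1 \gg u^{(L)}_i$ then $u^{(L+1)}_1 / u^{(L+1)}_i \approx (u^{(L)}_1/u^{(L)}_i)^k \to \infty$. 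I would make this precise by tracking $\log u^{(L)}_i$, which satisfies $\log u^{(L+1)}_i \approx k \log u^{(L)}_i + \log c_k$, so $\log u^{(L)}_i \sim k^L \cdot (\text{something depending on } \log u^{(0)}_i)$, and since $\log u^{(0)}_1 > \log u^{(0)}_i$ for all $i \ne 1$, the ratio $u^{(L)}_i / u^{(L)}_1 \to 0$, in fact doubly-exponentially fast in $L$. Feeding this into $K^{(L)}$: the same super-exponential separation propagates through the NTK recursion (one must check the $\dot\Sigma$ factor, whose dual map has the $A'^2 = 0$ leading coefficient and so also contracts at $0$), so that after normalizing by $K^{(L)}(x, x^{(1)})$ one gets $K^{(L)}(x, x^{(i)})/K^{(L)}(x, x^{(1)}) \to 0$ for every $i \ne 1$, while the diagonal entries and the $(1,1)$ entry are comparable.

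Finally I would plug this into the kernel machine Eq.~\eqref{eq: Kernel Machine with NTK}. Writing $\hat K^{(L)} = K^{(L)}/K^{(L)}(x,x^{(1)})$, as $L\to\infty$ the vector $\hat K^{(L)}(X, x)$ converges to $e_1$ (the first standard basis vector, after suitable diagonal rescaling) and the Gram matrix $K_n^{(L)}$, normalized consistently, converges to the identity (its off-diagonal entries decay super-exponentially relative to its diagonal by the same gap-widening argument applied to every pair of training points). Hence $y (K_n^{(L)})^{-1} K^{(L)}(X, x) \to y_1 = y^{(1)}$ in sign, i.e.\ $m_n(x) = \sign(y^{(1)})$, which is exactly the label of the nearest neighbor — so $m_n$ implements $1$-NN on $\mathcal{S}_+^d$. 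The main obstacle I anticipate is the bookkeeping around the matrix inverse: one needs a quantitative statement that $K_n^{(L)}$ (properly diagonally normalized) is within $o(1)$ of the identity in, say, operator norm, uniformly enough that $(K_n^{(L)})^{-1} K^{(L)}(X,x)$ really is dominated by its first coordinate; this requires controlling all pairwise rates simultaneously and arguing that the \emph{slowest-decaying} off-diagonal still decays faster than the relevant diagonal scale. The other delicate point is handling the case $k = 2$ versus higher $k$, and confirming that the $\dot\Sigma^{(L)}$ multiplicative terms do not reintroduce a non-negligible contribution — but since $A' = 0$ forces the $\phi'$ dual map to also vanish to positive order at $0$, this contributes only a further damping factor and does not change the nearest-neighbor conclusion.
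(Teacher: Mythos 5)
Your proposal is correct and follows essentially the same route as the paper: both exploit that $A=A'=0$ forces the dual activation to vanish to order at least $2$ at the origin, so the iterated kernel at the nearest training point dominates all others doubly-exponentially in $L$, and both finish by noting the normalized Gram matrix tends to the identity so the kernel-machine prediction reduces to $\sign(y^{(1)})$. The paper formalizes the separation with an exact induction $K^{(L)}(z)=z^{2^L}g_L(z)$ with $g_L$ positive and increasing, which is just a cleaner packaging of your $\log u^{(L)}$ recursion.
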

The proof of Theorem \ref{theorem: NTK 1NN} is provided in Supplementary Information \ref{appendix: E}.  The proof strategy is to show that the value of the kernel between a test example and its nearest training example dominates the prediction as $L\to\infty$.  In particular, assuming without loss of generality that $x^Tx^{(1)} > x^Tx^{(j)}$ for $j \in \{2, 3, \ldots, n\}$, we prove that: 
\begin{align*}
    \lim_{L \to \infty} \frac{K^{(L)}(x, x^{(j)})}{K^{(L)}(x, x^{(1)})} = 0~.
\end{align*}
As a result, after re-scaling by $K^{(L)}(x, x^{(1)})$, we obtain that $m_n(x) = \sign(y^{(1)})$.  We note that this proof is analogous to the standard proof that the Gaussian kernel $K(x, \tilde{x}) = \exp\left(- \gamma \|x - \tilde{x}\|^2 \right)$ converges to the 1-NN classifier as  $\gamma \to \infty$.

\subsection*{Case 3 ($A \neq 0$) networks implement majority vote classifiers.}


We now analyze infinitely wide and deep networks 
when the activation functon satisfies $\mathbb{E}[\phi(z)] \neq 0$ for $z \sim \mathcal{N}(0, 1)$.  In this setting, we establish conditions under which the infinitely wide and deep network implements majority vote classification, i.e., the prediction on test samples is simply the label of the class with greatest representation in the training set. More precisely, the following proposition (proof in Supplementary Information \ref{appendix: F}) implies that when the infinite depth NTK is a constant non-zero value for any two non-equal inputs, the resulting classifier is the majority vote classifier.  

\begin{prop}
\label{prop: NTK Majority Vote Condition}
Let $m_n$ denote the classifier in Eq.~\eqref{eq: NTK Classifier} corresponding to training an infinitely wide and deep network with activation function $\phi(\cdot)$ on $n$ training examples. For any $x, \tilde{x} \in \mathcal{S}_+^{d}$ with $x \neq \tilde{x}$, if the NTK $K^{(L)}$ satisfies 
\begin{align}
    \label{eq: NTK Majority vote condition 1}
    \lim_{L \to \infty} \frac{K^{(L)}(x, \tilde{x})}{C(L)} = C_1 \quad \textrm{and } \quad \lim_{L \to \infty} \frac{K^{(L)}(x, \tilde{x})}{C(L)} \neq \lim_{L \to \infty} \frac{K^{(L)}(x, x)}{C(L)},
\end{align}
with $C_1 > 0$ and $0 < C(L) < \infty$ for any $L$, then $m_n$ implements the majority vote classifier, i.e., 
\begin{align*}
    m_n(x) = \sign \Big(\sum_{i=1}^{n} y^{(i)} \Big)~.
\end{align*}
\end{prop}


We now analyze which activation functions satisfy Eq.~\eqref{eq: NTK Majority vote condition 1}.  As described in~\cite{PooleTransientChaos, YangEdgeofChaos, JudithEdgeofChaos, JaschaOrderedChaoticPhase}, under case 3, the value of $B' = \mathbb{E}[\phi'(z)^2]$ for $z \sim \mathcal{N}(0, 1)$ determines the fixed point attractors of $K^{(L)}$ as $L \to \infty$.  Thus, the infinite depth behavior under case 3 can be broken down into three cases based on the value of $B'$.  Using the terminology from \cite{PooleTransientChaos}, these cases are: 
$$(i)\; B' > 1 \textrm{ (Chaotic Phase),} \quad (ii)\; B' < 1 \textrm{ (Ordered Phase),} \quad  (iii)\; B' = 1  \textrm{ (Edge of Chaos)}.$$ 
In Lemma \ref{lemma: chaotic phase majority vote} in Supplementary Information \ref{appendix: G}, we demonstrate that in the chaotic phase, the resulting infinite depth NTK satisfies the conditions of Proposition \ref{prop: NTK Majority Vote Condition} and thus implements the majority vote classifier.  In Lemma \ref{lemma: ordered phase majority vote} in Supplementary Information \ref{appendix: G}, we similarly show that in the ordered phase the infinite depth NTK also corresponds to the majority vote classifier.\footnote{More precisely, we consider the behavior of the infinite depth classifier under ridge-regularization, as the regularization term approaches $0$.}  The remaining case known as "edge of chaos" has been analyzed in prior works for specific activation functions;  for example, 
the NTK for networks with ReLU activation 
satisfies Eq.~\eqref{eq: NTK Majority vote condition 1} with $C_1 = \frac{1}{4}$ and $C(L) = L+1$~\cite{ResNetNTK, JudithNTK}.  Hence by Proposition \ref{prop: NTK Majority Vote Condition}, the corresponding infinite depth classifier for ReLU networks corresponds to the majority vote classifier.



\section{Outline of Proof Strategy for Theorems \ref{theorem: NTK singular kernel} and \ref{theorem: Optimality of the NTK}}
\label{sec: Proof Sketch}

In the following, we outline the proof strategy for our main results.  This involves analyzing infinitely wide and deep networks via the limiting NTK kernel given by $K^{(L)}$ as the number of hidden layers $L \to \infty$.  As shown in~\cite{NTKJacot}, $K^{(L)}$ can be written recursively in terms of $K^{(L-1)}$ and the so-called dual activation function, which was introduced in~\cite{DualActivation}.  
\vspace{3mm}

\begin{definition} 
Let $\phi: \mathbb{R} \to \mathbb{R}$ be an activation function satisfying $\mathbb{E}_{x \sim \mathcal{N}(0, 1)}[\phi(x)^2] < \infty$. Its \textbf{dual activation function} $\check{\phi}: [-1, 1] \to \mathbb{R}$ is given by
\begin{align*}
    \check{\phi}(z) &= \mathbb{E}_{(u, v) \sim \mathcal{N}\left(\mathbf{0}, \Lambda \right)} [\phi(u) \phi(v)], \quad \textrm{where }
    \Lambda = \begin{bmatrix} 1 & z \\ z & 1\end{bmatrix}.
\end{align*}
\end{definition}

While all quantities in our theorems are stated in terms of activation functions, these can be restated in terms of dual activations as follows: 
\begin{align*}
    A^2  &= \check{\phi}(0) \quad \text{and} \quad (A')^2 = \check{\phi'}(0)  ~ \quad \text{and} \quad B' = \check{\phi'}(1) ~.
\end{align*}

Assuming that $\phi$ is normalized such that $\check{\phi}(1) = 1$,\footnote{Such normalization is always possible for any activation function satisfying $\mathbb{E}[\phi(z)^2] < \infty$ for $z \sim \mathcal{N}(0, 1)$ and has been used in various works before including~\cite{JudithNTK, ResNetNTK, LaplaceAndNTK1, MehlersFormulaCompositionalKernels, PenningtonEdgeofChaos, JudithEdgeofChaos}.}  the recursive formula for the NTK of a deep fully connected network for data on the unit sphere was described in~\cite{NTKJacot, LaplaceAndNTK1} in terms of dual activation functions as follows.
\vspace{3mm}

\noindent \textbf{Recursive Formula for the NTK.}
\label{lemma: Kernels radial recursion}
Let $f^{(L)}(x ; \mathbf{W})$ denote a fully connected neural network with $L$ hidden layers and activation $\phi(\cdot)$.  For $x, \tilde{x} \in \mathcal{S}^{d}$, let $z = x^T \tilde{x}$.  Then $K^{(L)}$ is radial, i.e. $K^{(L)}(x, \tilde{x}) = K^{(L)}(z)$, with
\begin{align}
\label{eq: NTK Recursive Formula}
    K^{(L)}(z) &= \check{\phi}^{(L)}(z) + K^{(L-1)}(z) \check{\phi}' \big(\check{\phi}^{(L-1)}(z)\big) \quad \textrm{and} \quad K^{(0)}(z) = z, 
\end{align}
where $\check{\phi}^{(L)}(z) = \check{\phi}( \check{\phi}^{(L-1)}(z))$ with $\check{\phi}^{(0)}(z) = \check{\phi}(z)$ and $\check{\phi}'(\cdot)$ denotes the derivative of $\check{\phi}(\cdot)$.  
\vspace{0.4cm}

We utilize the dynamical system in Eq.~\eqref{eq: NTK Recursive Formula} to analyze the behavior of $K^{(L)}(\cdot)$ as $L \to \infty$. Theorem \ref{theorem: NTK singular kernel} implies that upon normalization by $(L+1)\check{\phi'}(0)^L$, this dynamical system converges to a singular kernel with singularity of order $\alpha = -\log \left(\check{\phi'}(0) \right)/\log \left(\check{\phi'}(1) \right)$.  We now present a sketch of the proof of this result.  

We first derive the order of the singularity upon iteration of $\check{\phi}$, since as we show in Supplementary Information \ref{appendix: B}, the order of the singularity of the infinite depth NTK is the same as that of the iterated $\check{\phi}$.  Since we consider data in $\mathcal{S}_+^{d}$, $\check{\phi}(\cdot)$ is a function defined on the unit interval $[0, 1]$.  Hence, understanding the properties of infinitely wide and deep networks reduces to understanding the properties of iterating a function on the unit interval.  To provide intuition around how the iteration of a function on the unit interval can give rise to a function with a singularity, we discuss iterating a piecewise linear function as an illuminating example; see Fig.~\ref{fig: Piecewise Linear Iteration} for a visualization. 

\begin{lemma} 
\label{lemma: piecewise linear iteration}
For $0 < a < 1$ and $b > 1$, let $f: [0, 1] \to \mathbb{R}$ and $c = \frac{b-1}{b-a}$ such that
\begin{align*}
    f(x) = \begin{cases} ax & \textrm{if } x \in [0, c] \\ 1 - b(1-x)  & \textrm{if } x \in (c, 1]\end{cases}~. 
\end{align*}
Then, 
\begin{align*}
    \lim_{L \to \infty} \frac{f^{(L)}(x)}{a^{L}} = \frac{R(x)}{(1 - x)^{-\log_b a}}~, 
\end{align*}
where $R(x)$ is non-negative, bounded from above and bounded away from $0$ around $x = 1$. 
\end{lemma}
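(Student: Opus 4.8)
The plan is to analyze the orbit of $f$ on $[0,1]$ directly, exploiting that $f$ has exactly two fixed points: an attracting one at $0$ with multiplier $a<1$ (the left branch), and a repelling one at $1$ with multiplier $b>1$ (the right branch), and that the breakpoint $c$ is chosen precisely so that $f$ is continuous there. First I would record the structural facts I will use. (i) $[0,c]$ is forward‑invariant, since $f([0,c])=[0,ac]\subseteq[0,c]$, and on it $f$ acts as multiplication by $a$. (ii) On $(c,1]$, writing $u=1-x$, the map acts by $u\mapsto bu$. (iii) Crucially, $b(1-c)=\frac{b(1-a)}{b-a}<1$ (an inequality equivalent to $b>1$, using $a>0$), so an orbit started at $u_0=1-x_0\in(0,1-c)$ has $u_k=b^k u_0$ strictly increasing until the first index $N=N(x_0)$ with $b^N u_0\ge 1-c$; at that step $x_N=1-b^Nu_0$ lands in $(0,c]$, and it cannot overshoot below $0$ because $b^{N-1}u_0<1-c$ forces $b^Nu_0<b(1-c)<1$. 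Thus every $x\in[0,1)$ enters $[0,c]$ after finitely many steps and stays there.

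Combining these, for $L\ge N(x)$ we get $f^{(L)}(x)=a^{L-N(x)}x_{N(x)}$, so $f^{(L)}(x)/a^{L}=a^{-N(x)}x_{N(x)}$ is \emph{eventually constant} in $L$; the limit therefore exists trivially and equals $a^{-N(x)}x_{N(x)}$. (For $x\in[0,c]$ this is immediate with $N(x)=0$, giving the value $x$.) It then remains to put this into the stated form. Setting $t=t(x)=\log_b\frac{1-c}{1-x}$ (which is $>0$ for $x\in(c,1)$ and $\to\infty$ as $x\to1$) and $\delta(x)=\lceil t\rceil - t\in[0,1)$, so that $N(x)=\lceil t\rceil=t+\delta(x)$, a one‑line computation gives
$x_{N(x)}=1-b^{t+\delta(x)}(1-x)=1-b^{\delta(x)}(1-c)$
and
$a^{-N(x)}=a^{-\delta(x)}a^{-t}=a^{-\delta(x)}(1-c)^{-\log_b a}(1-x)^{\log_b a}$.
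Hence $f^{(L)}(x)/a^{L}\to R(x)(1-x)^{\log_b a}=R(x)/(1-x)^{-\log_b a}$ (using $-\log_b a>0$, which is the claimed singularity exponent), with
$R(x)=a^{-\delta(x)}\bigl(1-b^{\delta(x)}(1-c)\bigr)(1-c)^{-\log_b a}$.

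Finally I would check the qualitative properties of $R$ near $x=1$. As $x\to1$, $t(x)\to\infty$ and the offset $\delta(x)$ ranges through $[0,1)$ in a sawtooth fashion, so $R$ need not be continuous but is squeezed between positive constants: $a^{-\delta}\in[1,a^{-1})$; the factor $1-b^{\delta}(1-c)$ decreases from $c$ (at $\delta=0$) to $1-b(1-c)>0$ (as $\delta\to1$); and $(1-c)^{-\log_b a}$ is a fixed positive constant. Multiplying these bounds shows $R$ is non‑negative, bounded above, and bounded away from $0$ on a neighborhood of $1$. The step I expect to need the most care is (iii): tracking the orbit \emph{exactly} through the single transition from the expanding branch near $1$ to the contracting branch — in particular the overshoot bound $b^Nu_0<1$ (which is exactly where $b(1-c)<1$ is used) and the degenerate case $b^{N-1}u_0=1-c$, where $x_N=c$ and the convention $\delta=0$ keeps the formula correct. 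Everything after that is routine algebra, and Figure~\ref{fig: Piecewise Linear Iteration} makes the orbit picture transparent.
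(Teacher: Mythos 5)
Your proof is correct and follows essentially the same route as the paper's: iterate the expanding branch near $x=1$ until the orbit enters the invariant interval $[0,c]$, compute the hitting time $N(x)=\left\lceil \log_b\frac{1-c}{1-x}\right\rceil$, and read off the exponent $-\log_b a$ from the factor $a^{-N(x)}$. Your write-up is if anything slightly more explicit than the paper's (the closed form for $R$, the no-overshoot bound $b^N u_0 < b(1-c) < 1$), but the underlying argument is identical.
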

\begin{proof}
For any $x \in [0, c]$, we necessarily have: 
\begin{align*}
    \lim_{L \to \infty} \frac{f^{(L)}(x)}{a^L} = \lim_{L \to \infty} \frac{a^L x}{a^L} = x.
\end{align*}
Now for fixed $x \in (c, 1)$, since $x = 0$ is an attractive fixed-point of $f$, let $L_0$ denote the smallest integer such that $f^{(L_0)}(x) \leq c$.  Hence, since $f^{(L_0)}(x) \in [0, c]$, we obtain: 
\begin{align}
\label{eq: Iteration of linear}
     \lim_{L \to \infty} \frac{f^{(L)}(x)}{a^L} = \lim_{L \to \infty} \frac{f^{(L - L_0)}(f^{(L_0)}(x))}{a^{L - L_0}} \frac{1}{a^{L_0}} = f^{(L_0)}(x) a^{-L_0}.
\end{align}
We next solve for $L_0$ by analyzing the iteration of $g(x) := 1 - b(1-x)$.  In particular, we observe that $g^{(L)}(x) = 1 - b^L(1-x)$, and thus $L_0$ is given by: 
\begin{align*}
    1 - b^{L_0}(1 - x) \leq c \implies L_0 = \left\lceil \log_a \left(\frac{1-x}{1-c}\right)^{-\log_b a} \right\rceil \implies a^{-L_0} \in \left[ \left(\frac{1-c}{1-x}\right)^{-\log_b a}, \frac{1}{a} \left(\frac{1-c}{1-x}\right)^{-\log_b a} \right]. 
\end{align*}
Hence, by Eq.~\eqref{eq: Iteration of linear} we conclude that for $x \in (c, 1)$, it holds that 
\begin{align*}
    \lim_{L \to \infty} \frac{f^{(L)}(x)}{a^{L}} &= \frac{R(x)}{(1 - x)^{-\log_b a}} ~,
\end{align*}
where $R(x)$ is non-negative, bounded from above and bounded away from $0$ around $x=1$, which completes the proof.
\end{proof}


\begin{figure}[!t]
    \centering
    \includegraphics[width=.9\textwidth]{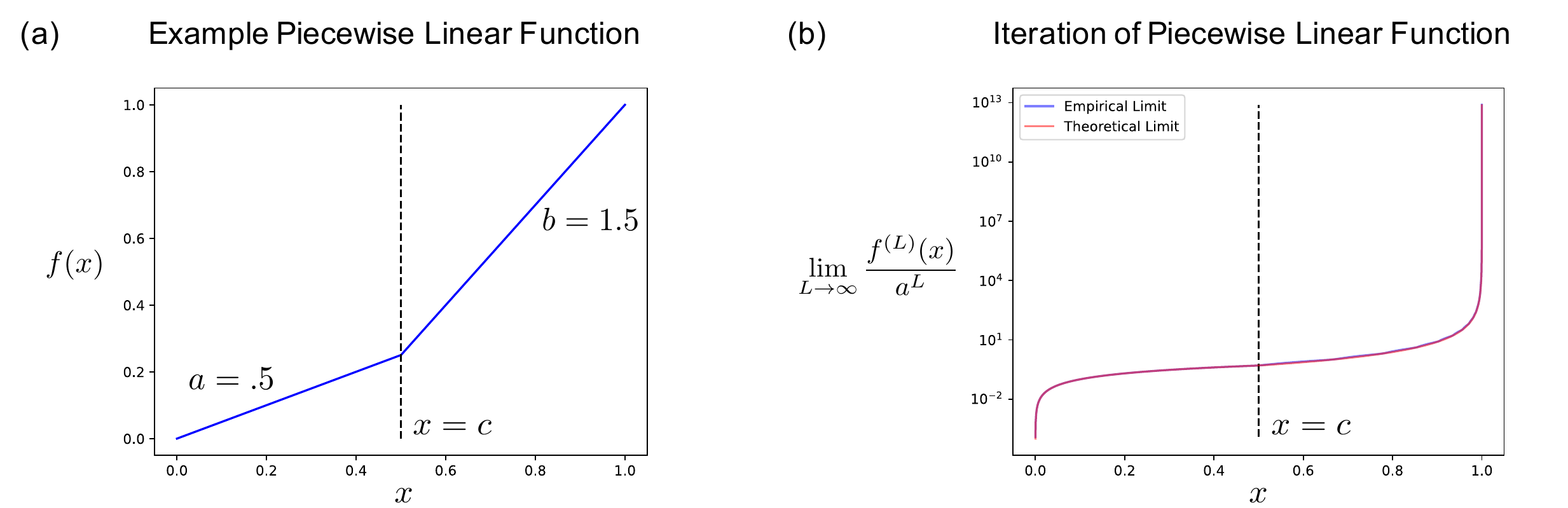}
    \caption{Iteration of a piecewise linear function on a unit interval leads to a function with a singularity at $x=1$, upon appropriate normalization.  (a) We consider the piecewise linear function $f(x)$ given by $1 - b(1-x)$ on $(c, 1]$ and $ax$ on $[0, c]$, where $a =.5, b = 1.5$ and $c = \frac{b-1}{b-a}$. (b) We observe that upon iterating $f(\cdot)$ numerically to the limit of machine precision, the resulting function strongly agrees with the theoretical limit of Lemma~\ref{lemma: piecewise linear iteration} given by a function with singularity of order $-\log_b a \approx 1.7$.}
    \label{fig: Piecewise Linear Iteration}
\end{figure}

In Supplementary Information \ref{appendix: B}, we extend this analysis to the iteration of dual activations on the unit interval, thereby establishing the order of a singularity obtained by iterating dual activation functions.  We then show that this order equals the order of the singularity given by the infinite depth NTK.   

Next, we discuss the proof strategy for Theorem \ref{theorem: Optimality of the NTK}, which establishes conditions on the activation function under which infinitely wide and deep networks achieve optimality in the classification setting.  The proof builds on results in~\cite{DevroyeHilbertKernel} 
characterizing the optimality of singular kernel smoothers of the form 
$$g(x) = \frac{\sum_{i=1}^{n} y^{(i)} K(x^{(i)}, ~x)}{\sum_{i=1}^{n} K(x^{(i)},~x)}, \quad \textrm{where } K(x^{(i)}, x) = \frac{1}{\|x - x^{(i)}\|^{\alpha}}.$$  
In particular, it is shown that if $\alpha = d$, then $g(x)$ achieves optimality.  Since Theorem \ref{theorem: NTK singular kernel} establishes conditions under which the infinite depth NTK implements a singular kernel, to complete the proof we show that infinitely wide and deep classifiers achieve optimality by (1) showing that the classifier $m_n$ implements a singular kernel smoother, and (2) selecting $\phi$ such that $\alpha = d$ for the corresponding singular kernel.

\section{Discussion}

In this work, we identified and constructed explicit neural networks that achieve optimality for classification when trained using standard procedures.  Furthermore, we provided a taxonomy characterizing the behavior of infinitely wide and deep neural network classifiers.  Namely, we showed that these models implement one of the following three well-known types of classifiers: (1) 1-NN (test predictions are given by the label of the nearest training example) ; (2) majority vote (test predictions are given by the label of the class with greatest representation in the training set); or (3) singular kernel classifiers (a set of classifiers containing those that achieve optimality).  We conclude by discussing implications of our work and future extensions.

\textbf{Benefit of Depth in Neural Networks.}  An emerging trend in machine learning is that larger neural networks capable of interpolating (i.e., perfectly fitting) the training data, can generalize to test data~\cite{belkin2019reconciling, DeepDoubleDescent, RethinkingGeneralization}.  While the size of neural networks can be increased through width or depth, works such as~\cite{belkin2019reconciling, DeepDoubleDescent} primarily identified a benefit to increasing network width.  Indeed, it remained unclear whether there was any benefit to using extremely deep networks.  For example, recent works~\cite{EshaanConvDepth, XiaoDynamicalIsometry, PenningtonEdgeofChaos} empirically demonstrated that drastically increasing depth in networks with ReLU or tanh activation could lead to worse performance.  In this work, we established a remarkable benefit of very deep networks by proving that they achieve optimality with a careful choice of activation function.  In line with previous empirical findings, we proved that deep networks with activations such as ReLU or tanh do not achieve optimality.



\textbf{Regression versus Classification.}  Our results demonstrate the benefit of using infinitely wide and deep networks for classification tasks.  We note that this in stark contrast to the regression setting, where infinitely deep and wide neural networks are far from optimal, as they simply predict a non-negative constant almost everywhere~\cite{JudithNTK, ResNetNTK}.  Thus, our work provides concrete examples of neural networks that are effective for classification but not regression.  



\textbf{Edge of Chaos Regime.}  An interesting class of models that are only partially characterized by our taxonomy corresponds to networks with activations in the edge of chaos regime, i.e., when the activation function, $\phi(\cdot)$ satisfies $\mathbb{E}[\phi(z)] \neq 0$ and $\mathbb{E}[\phi'(z)^2] = 1$ for $z \sim \mathcal{N}(0, 1)$.  We proved that all activations in this class that have been described so far~\cite{JudithNTK, ResNetNTK}, including the popular ReLU activation, give rise to infinitely wide and deep networks that implement the majority vote classifier.  While it appears that all activations in this class lead to the majority vote classifier, it remains open to understand whether there exist other activations in this regime that implement alternative classifiers.



\textbf{Finite vs. Infinite Neural Networks.} In this work, we identified and constructed infinitely wide and deep classifiers that achieve optimality.  An important next question is to understand whether interpolating neural networks that are finitely wide and deep can achieve optimality for classification and provide specific activation functions to do so.  
We also note that Bayes optimality considers the setting when the number of training examples approaches infinity. Another natural next step is to characterize the number of training examples needed for infinitely wide and deep classifiers to reasonably approximate the Bayes optimal classifier.  
Recent work~\cite{RatesforHilbertKernel} identified a slow (logarithmic) rate of convergence for singular kernel classifiers, thereby implying that many training examples are needed for these models to be effective in practice.  An important open direction of future work is thus to determine not only whether finitely wide and deep networks are optimal for classification but also whether these models require fewer samples to perform well in practice.

\section*{Acknowledgements}

A.R. and C.U.~were partially supported by NSF (DMS-1651995), ONR (N00014-17-1-2147 and N00014-18-1-2765), the MIT-IBM Watson AI Lab, \textcolor{black}{the Eric and Wendy Schmidt Center at the Broad Institute,} and a Simons Investigator Award (to C.U.). M.B.~acknowledges support from NSF  IIS-1815697 and  NSF DMS-2031883/Simons Foundation  Award 814639.

\bibliographystyle{abbrv}
\bibliography{references}

\newpage 

\appendix

\noindent{\huge\bf{Supplementary Information}}

\section{Preliminaries on NTK and Dual Activations}
\label{appendix: A}

In this section, we briefly review properties of dual activations that we will use to prove our main results.  In order to analyze the behavior of the iterated dual activation, we reference the following result of~\cite{DualActivation}, which implies that the dual activation is analytic around $0$ on the interval $[-1, 1]$.  

\vspace{5mm}
\noindent \textbf{Analyticity of Dual Activations.}
Let $\phi(\cdot)$ be an activation function such that $\mathbb{E}_{x \sim \mathcal{N}(0, 1)}[\phi(x)^2] = 1$, and let $\check{\phi}:[-1, 1] \to \mathbb{R}$ denote the dual activation. Then, for $z \in [-1,1]$,
\begin{align}
\label{eq: Analytic formula dual activation}
    \check{\phi}(z) = \sum_{i=0}^{\infty} a_i z^i, 
\end{align}
where $a_i \geq 0$ for all $i \in \mathbb{N}$. 

\vspace{3mm}

As proven in \cite[Lemma 11]{DualActivation}, several key properties are implied by Eq.~\eqref{eq: Analytic formula dual activation}. Those utilized in this work are: (1) $\check{\phi}$ is increasing on $[0, 1]$, and (2) non-negativity of $\check{\phi}(\cdot)$ on $[0, 1]$.   Eq.~\eqref{eq: Analytic formula dual activation} also implies the following property of dual activations that we will use to construct our taxonomy of infinitely wide and deep neural network classifiers.  

\begin{lemma}
\label{lemma: 1st derivative of dual values}
Let $\check{\phi}: [-1, 1] \to \mathbb{R}$ be a dual activation such that $\check{\phi}(0) = 0$, $\check{\phi}(1) = 1$, and $\check{\phi}(z) \neq z$.  Then, $0 \leq \check{\phi}'(0) < 1$.
\end{lemma}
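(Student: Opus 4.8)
The plan is to exploit the power-series representation $\check\phi(z)=\sum_{i=0}^\infty a_i z^i$ with all $a_i\ge 0$ (the Analyticity of Dual Activations result). The hypotheses $\check\phi(0)=0$ and $\check\phi(1)=1$ translate immediately into $a_0=0$ and $\sum_{i=1}^\infty a_i=1$, so $\check\phi'(0)=a_1\in[0,1]$. This already gives the lower bound $\check\phi'(0)\ge 0$ and the weak upper bound $\check\phi'(0)\le 1$; the real content is ruling out equality $a_1=1$.

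First I would observe that $a_1=1$ together with $\sum_{i\ge 1}a_i=1$ and $a_i\ge 0$ forces $a_i=0$ for all $i\ge 2$, hence $\check\phi(z)=z$ identically on $[-1,1]$. This directly contradicts the hypothesis $\check\phi(z)\neq z$. Therefore $\check\phi'(0)=a_1<1$, and combined with $a_1\ge 0$ we conclude $0\le\check\phi'(0)<1$, which is exactly the claim. This is essentially a one-line argument once the nonnegative-coefficient expansion is in hand.

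I do not anticipate a serious obstacle here; the only subtlety is making sure the normalization hypothesis $\mathbb{E}[\phi(z)^2]=1$ needed to invoke the nonnegative power-series expansion is consistent with $\check\phi(1)=1$ — but these are the same condition, since $\check\phi(1)=\mathbb{E}[\phi(u)\phi(u)]=\mathbb{E}[\phi(u)^2]$ when the covariance is the identity, so the expansion applies verbatim. One should also note that convergence of $\sum a_i$ at $z=1$ is guaranteed because $\check\phi(1)=1<\infty$ and all terms are nonnegative, so term-by-term manipulations (in particular extracting $a_1=\check\phi'(0)$ via Abel's theorem or simply by the known analyticity of $\check\phi$ near $0$) are justified. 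Thus the whole proof reduces to: expand, read off $a_0=0$ and $\sum_{i\ge1}a_i=1$, note $a_1\in[0,1]$, and use $\check\phi\not\equiv\mathrm{id}$ to exclude $a_1=1$.
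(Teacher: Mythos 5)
Your proof is correct and follows essentially the same route as the paper's: expand $\check\phi$ via the nonnegative-coefficient power series, deduce $a_0=0$ and $\sum_{i\ge1}a_i=1$, conclude $0\le a_1\le 1$, and rule out $a_1=1$ since it would force $\check\phi(z)=z$. Your added remarks on the normalization and convergence at $z=1$ are fine but not needed beyond what the paper already assumes.
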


\begin{proof}
By Eq.~\eqref{eq: Analytic formula dual activation}, we need only show that $0 \leq a_1 < 1$.  Since $\check{\phi}(1) = 1$, we obtain that $\sum_{i=1}^{\infty} a_i = 1$.   Since $a_i \geq 0 $ for all $i \in \mathbb{N}$, we conclude that $0 \leq a_1 \leq 1$.  Now if $a_1 = 1$, then $a_i =  0$ for $i \geq 2$, which implies that $\check{\phi}(z) = z$.  Hence, we conclude that $0 \leq a_1 < 1$, which completes the proof.  
\end{proof}

\section{Proofs of Theorem \ref{theorem: NTK singular kernel} and Theorem \ref{theorem: Optimality of the NTK}}
\label{appendix: B}

\noindent We first prove Theorem \ref{theorem: NTK singular kernel}, which is expressed below in terms of the dual activation function. 

\begin{theorem*}
Let $K^{(L)}$ denote the NTK of a fully connected neural network with $L$ hidden layers and activation function $\phi(\cdot)$.  For $x, \tilde{x} \in \mathcal{S}_+^{d}$, let $z = x^T \tilde{x}$.  If the dual activation function $\check{\phi}(\cdot)$ satisfies
\begin{enumerate}
\item[1)] $\check{\phi}(0) = 0$, $\check{\phi}(1) = 1$,
\item[2)] $0 < \check{\phi}'(0) < 1$ and $\check{\phi}'(1) < \infty$,
\end{enumerate}
then: 
\begin{align*}
    \lim\limits_{L \to \infty} \frac{K^{(L)}(x, \tilde{x})}{\check{\phi}'(0)^L (L+1)} = \frac{R(x^T \tilde{x})}{\|x - \tilde{x}\|^{\alpha}}~, 
\end{align*}
where $\alpha = -2\frac{\log(\check{\phi}'(0))}{\log\left(\check{\phi}'(1)\right)}$ and $R(u) \geq 0$ is bounded for $u \in [0, 1]$ and bounded away from $0$ around $u = 1$. 
\end{theorem*}

In order to prove this theorem, we first prove that the iterated, normalized NTK converges to a singular kernel without explicitly identifying the order of the singularity.       

\begin{lemma}
\label{lemma: convergence to singular kernel}
Let $K^{(L)}$ denote the NTK of a depth $L$ fully connected network with normalized activation function $\phi$.  Assuming $\check{\phi}$ satisfies the conditions of Theorem \ref{theorem: NTK singular kernel}, then for any $x, \tilde{x} \in \mathcal{S}_+^d$ it holds that
\begin{align*}
    \lim\limits_{L \to \infty} \frac{K^{(L)}(x, \tilde{x})}{a_1^{L} (L+1)} = \psi(x^T\tilde{x}),
\end{align*}
where $\psi: [0, 1] \to \mathbb{R}$ can be written as a power series with non-negative coefficients with a singularity at $1$.
\end{lemma}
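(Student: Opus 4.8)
The plan is to analyze the recursion $K^{(L)}(z) = \check{\phi}^{(L)}(z) + K^{(L-1)}(z)\,\check{\phi}'(\check{\phi}^{(L-1)}(z))$ from Eq.~\eqref{eq: NTK Recursive Formula} by first understanding the iterated dual activation $\check{\phi}^{(L)}(z)$ and then unrolling the full recursion. First I would establish that for fixed $z \in [0,1)$, $\check{\phi}^{(L)}(z) \to 0$: by Lemma \ref{lemma: 1st derivative of dual values} we have $0 \le \check{\phi}'(0) < 1$, and since $\check{\phi}$ is increasing, non-negative on $[0,1]$, and fixes both $0$ and $1$ with no other fixed point (because $\check{\phi}(z) \neq z$ and power-series positivity forces strict convexity-type behavior), $0$ is an attractive fixed point whose basin is all of $[0,1)$. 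Near $0$, $\check{\phi}(z) = a_1 z + O(z^2)$ with $a_1 = \check{\phi}'(0)$, so $\check{\phi}^{(L)}(z) \sim c(z)\, a_1^{L}$ for a positive constant $c(z)$ depending on $z$ (this is the standard Koenigs linearization for an analytic map at an attracting fixed point with multiplier in $(0,1)$). For $z = 1$, $\check{\phi}^{(L)}(1) = 1$ for all $L$.

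Next I would unroll the recursion. Writing $P_L(z) := \prod_{j=0}^{L-1} \check{\phi}'(\check{\phi}^{(j)}(z))$, one gets $K^{(L)}(z) = \sum_{k=0}^{L} \check{\phi}^{(k)}(z)\, \dfrac{P_L(z)}{P_k(z)}$ (with the convention $\check{\phi}^{(0)}(z)=\check{\phi}(z)$ handled against the $K^{(0)}(z)=z$ base case — a minor bookkeeping point). Since $\check{\phi}^{(j)}(z) \to 0$ and $\check{\phi}'$ is continuous with $\check{\phi}'(0) = a_1$, the factors $\check{\phi}'(\check{\phi}^{(j)}(z))$ converge to $a_1$, so $P_L(z)$ behaves like $a_1^{L}$ up to a bounded, bounded-away-from-zero multiplicative factor $\beta(z) := \prod_{j=0}^{\infty} \big(\check{\phi}'(\check{\phi}^{(j)}(z))/a_1\big)$, which converges because $\check{\phi}'(\check{\phi}^{(j)}(z)) = a_1 + O(\check{\phi}^{(j)}(z)) = a_1 + O(a_1^{j})$ gives a summable log. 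Then $K^{(L)}(z)/(a_1^{L}(L+1))$ is, up to these convergent correction factors, the Cesàro-type average $\frac{1}{L+1}\sum_{k=0}^{L} \check{\phi}^{(k)}(z) a_1^{-k}$; since $\check{\phi}^{(k)}(z) a_1^{-k} \to c(z)$, this average converges to the same limit $c(z)$, and hence $K^{(L)}(z)/(a_1^{L}(L+1)) \to \psi(z) := c(z)\,\beta(z)$ pointwise on $[0,1)$, with the value at $z=1$ given by $\lim_L K^{(L)}(1)/(a_1^L(L+1))$; since $\check{\phi}^{(k)}(1)=1$ and $P_L(1) = \check{\phi}'(1)^L$, one checks $\check{\phi}'(1) > a_1$ is forced (edge-of-chaos/ordered dichotomy under these hypotheses — in Case 1 this needs the normalization and the positivity of coefficients), handled separately but giving a finite positive $\psi(1)$.

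To finish I would argue that $\psi$ inherits a power-series representation with non-negative coefficients: $K^{(L)}(z)$ is, by Eq.~\eqref{eq: NTK Recursive Formula} and the analyticity of $\check{\phi}$ with non-negative Taylor coefficients (Eq.~\eqref{eq: Analytic formula dual activation}), itself a power series in $z$ with non-negative coefficients for every finite $L$; the normalized limit of such functions, where it exists on $[0,1)$ and stays bounded away from blowing up except possibly at $1$, is again representable as a power series with non-negative coefficients on $[0,1)$ (using, e.g., that the coefficients $a_i^{(L)}/(a_1^L(L+1))$ are monotone-controlled and converge, via a normal-families / Vitali argument on the disk). The singularity at $1$ follows because $\psi$ is non-negative, non-decreasing (limit of non-decreasing functions), and — as I would show by the same linearization estimate applied near $z=1$, tracking how many iterations it takes $\check{\phi}^{(L)}(z)$ to enter a fixed neighborhood of $0$ — it must be unbounded as $z \uparrow 1$; that count grows like $\log(1-z)/\log\check{\phi}'(1)$, which is exactly the mechanism in Lemma \ref{lemma: piecewise linear iteration} and foreshadows $\alpha = -2\log\check{\phi}'(0)/\log\check{\phi}'(1)$, though in this lemma I only need to certify that \emph{some} singularity is present.

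The main obstacle I anticipate is the interchange of limits needed to transfer the non-negative-power-series structure (and the boundedness of $R$ away from $1$) from the finite-$L$ kernels to the limit $\psi$: pointwise convergence on $[0,1)$ is straightforward from the Koenigs/Cesàro argument, but controlling the Taylor coefficients of the normalized limit — and in particular ruling out that $\psi$ already blows up at some $z_0 < 1$ — requires a quantitative, $z$-uniform version of the estimate $\check{\phi}^{(k)}(z) = c(z) a_1^k (1 + o(1))$ on compact subsets of $[0,1)$, which I would obtain from a single-variable application of Vitali's theorem on analytic functions (the $K^{(L)}/(a_1^L(L+1))$ are analytic on the unit disk with a uniform local bound coming from non-negativity of coefficients plus the boundary value at a point just inside $1$). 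The precise identification of the exponent $\alpha$ is then deferred to the full argument in Supplementary Information \ref{appendix: B}, where the piecewise-linear heuristic of Lemma \ref{lemma: piecewise linear iteration} is made rigorous for $\check{\phi}$.
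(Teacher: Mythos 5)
Your strategy is sound and genuinely different from the paper's. The paper does not linearize: it sandwiches $\check{\phi}$ between the quadratics $a_1(z + (a_2/a_1)z^2)$ and $a_1(z + (1/a_1 - 1)z^2)$, bounds $\check{\phi}'$ linearly, pushes these through the closed form $K^{(L)}(z) = \sum_i \check{\phi}^{(i)}(z)\prod_{j\ge i}\check{\phi}'(\check{\phi}^{(j)}(z))$ to trap $K^{(L)}(z)/(a_1^L(L+1))$ between two infinite products, and then invokes the Cauchy criterion plus a ratio test (using $\check{\phi}^{(j)}(z)/\check{\phi}^{(j-1)}(z)\to a_1$) to show the products converge; the singularity at $1$ is extracted by noting the partial product $z\prod_{j<L_0}(1+(a_2/a_1)\check{\phi}^{(j)}(z))$ can be made larger than any $R$ by choosing $L_0$ and then $z$ close to $1$. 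You instead invoke Koenigs linearization to get $\check{\phi}^{(k)}(z)/a_1^k \to c(z)$ directly, observe that the correction products $\prod_{j\ge k}(\check{\phi}'(\check{\phi}^{(j)}(z))/a_1)$ converge because $\sum_j \check{\phi}^{(j)}(z) < \infty$, and finish with a Ces\`aro argument plus Vitali/Montel for the analytic structure of the limit. Each route buys something: the paper's is fully elementary and self-contained, while yours is shorter conceptually and -- notably -- more careful than the paper at the step where the limit is claimed to be a power series with non-negative coefficients. The paper just asserts this ``according to Eq.~\eqref{eq: Closed form for NTK}''; your Vitali argument (uniform boundedness on $|z|\le z_0$ from non-negativity of coefficients plus pointwise convergence at $z_0$, hence locally uniform convergence and convergence of Taylor coefficients) actually supplies the missing justification.

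Two small slips worth fixing. First, your final formula $\psi(z) = c(z)\beta(z)$ is off: writing $\beta_k := \prod_{j<k}\rho_j$ with $\rho_j := \check{\phi}'(\check{\phi}^{(j)}(z))/a_1$, the normalized kernel is $\beta_L \cdot \frac{1}{L+1}\sum_{k\le L} u_k/\beta_k$ with $u_k := \check{\phi}^{(k)}(z)/a_1^k$, whose Ces\`aro limit is $\beta(z)\cdot c(z)/\beta(z) = c(z)$ -- the $\beta$ cancels, so $\psi = c$, not $c\beta$. Second, you claim $\psi(1)$ is ``finite positive,'' but under the hypotheses $\check{\phi}'(1) = \sum_i i a_i > \sum_i a_i = 1 > a_1$ (since $a_1<1$ forces some $a_i>0$ with $i\ge 2$), so $K^{(L)}(1) = \sum_{j\le L}\check{\phi}'(1)^j$ grows like $\check{\phi}'(1)^L$ and $K^{(L)}(1)/(a_1^L(L+1)) \to \infty$; the blow-up at $z=1$ is precisely the singularity the lemma asserts, and your own closing sentences acknowledge this, so the ``finite $\psi(1)$'' clause is internally inconsistent and should be dropped. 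Neither slip affects the viability of the approach.
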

\begin{proof}
We utilize the form of the NTK given in~\cite{CNTKArora} and utilize the radial form of the kernel in Eq.~\eqref{eq: NTK Recursive Formula}.  Namely, for $z \in [0, 1]$, we have: 
\begin{align}
\label{eq: Closed form for NTK}
    K^{(L)}(z)= \sum_{i=0}^{L} \check{\phi}^{(i)}(z) \prod_{j=i}^{L-1} \check{\phi}'\left(\check{\phi}^{(j)}(z)\right),
\end{align}
where $\check{\phi}^{(i)}$ denotes the iteration of $\check{\phi}$ $i$ times.  By Eq.~\eqref{eq: Analytic formula dual activation} and since $\check{\phi}(0) = 0$, we have that $\check{\phi}(z) = \sum_{i=1}^{\infty} a_i z^i$ for all $z \in [0, 1]$.\footnote{Note that the sum starts from $a_1$ since $\check{\phi}(0) = 0 \implies a_0 = 0$.}    Now, we bound $\check{\phi}$ by quadratic functions in $z$ and bound $\check{\phi}'$ by linear functions in $z$.  In particular, using the conditions $\check{\phi}(1) = 1$ and $\check{\phi}'(1) = C < \infty$, we obtain the upper bounds: 
\begin{align*}
    \check{\phi}(z) &= a_1 \left(z + \sum_{i=2}^{\infty} \frac{a_i}{a_1} z^{i}\right) \leq a_1 \left(z + \sum_{i=2}^{\infty} \frac{a_i}{a_1} z^{2}\right) = a_1 \left(z +  \left(\frac{1}{a_1} - 1\right) z^{2} \right), \\
    \check{\phi}'(z) &= a_1 \left(1 + \sum_{i=2}^{\infty} i \frac{a_i}{a_1} z^{i-1}\right) \leq a_1 \left(1 + \sum_{i=2}^{\infty} \frac{a_i}{a_1} i z \right) = a_1 \left(1 +  \left(\frac{C}{a_1} - 1\right) z \right).     
\end{align*}
Similarly, we obtain the lower bounds: 
\begin{align*}
    \check{\phi}(z) &= a_1 \left(z + \sum_{i=2}^{\infty} \frac{a_i}{a_1} z^{i}\right) \geq a_1 \left(z +  \frac{a_2}{a_1} z^{2}\right),  \\
    \check{\phi}'(z) &= a_1 \left(1 + \sum_{i=2}^{\infty} i \frac{a_i}{a_1} z^{i-1}\right) \geq a_1 \left(1 +  \frac{2a_2}{a_1} z \right) \geq a_1 \left(1 +  \frac{a_2}{a_1} z\right). 
\end{align*}
Now, substituting the above lower and upper bounds into the recursion for $\check{\phi}^{(i)}$, we obtain 
\begin{align}
\label{eq: Quadratic recursion for iterated dual}
a_1^{i} z \prod_{j=0}^{i-1} \left( 1 + \frac{a_2}{a_1}\check{\phi}^{(j)}(z) \right) \leq \check{\phi}^{(i)}(z) \leq a_1^{i} z \prod_{j=0}^{i-1} \left( 1 + \left(\frac{1}{a_1} - 1\right)\check{\phi}^{(j)}(z) \right).
\end{align}
Lastly, since $C \geq 1$, substituting Eq.\eqref{eq: Quadratic recursion for iterated dual} and the bounds on $\check{\phi}'$ into Eq.~\eqref{eq: Closed form for NTK} for $K^{(L)}$, we obtain 
\begin{align*}
    (L+1) a_1^{L} z \prod_{j=0}^{L-1}\left(1 + \frac{a_2}{a_1}\check{\phi}^{(j)}(z)\right) \leq K^{(L)}(z) \leq (L+1) a_1^{L} z \prod_{j=0}^{L-1}\left(1 + \left(\frac{C}{a_1} - 1\right)\check{\phi}^{(j)}(z)\right).
\end{align*}
Hence, to prove that $\psi(z):= \lim\limits_{L \to \infty} \frac{K^{(L)}(z)}{a_1^{L} (L+1)}$ is finite for $z \in [0, 1)$, we need to show that
\begin{align*}
    \prod_{j=0}^{\infty} \left(1 + \tilde{C}\check{\phi}^{(j)}(z)\right) < \infty 
\end{align*}
for all $z \in [0, 1)$ and any constant $\tilde{C}$.  By the Cauchy criterion \cite[Ch.5]{SteinComplexAnalysis}, the above infinite product converges if and only if the following sum converges:
\begin{align*}
    \sum_{j=0}^{\infty} \tilde{C} \check{\phi}^{(j)}(z) < \infty.
\end{align*}
This sum converges by the ratio test.  In particular, 
\begin{align*}
    \lim\limits_{j \to \infty} \frac{\check{\phi}^{(j)}(z)}{\check{\phi}^{(j-1)}(z)} = \lim\limits_{z \to 0} \frac{\check{\phi}(z)}{z}  = a_1 < 1,
\end{align*}
where we used the contractive mapping theorem~\cite{DynamicalSystem} to establish the first equality, since $0$ is a fixed point attractor of $\check{\phi}$.  As a consequence, $\psi(z) < \infty$ for $z \in [0, 1)$.  Now according to Eq.~\eqref{eq: Closed form for NTK}, $\psi(z)$ can be written as a  convergent power series with non-negative coefficients  for $z \in [0, 1)$.  To establish the singularity of $\psi(z)$ at $z = 1$, we show that for any constant $R > 0$, there exists $z_0$ such that $\psi(z) > R$ for $z > z_0$.  In particular, note that for any fixed $L_0$, 
\begin{align*}
    \lim\limits_{L \to \infty} \frac{K^{(L)}(z)}{a_1^{L} (L+1)} = \psi(z) \geq z \prod_{j=0}^{L_0-1}\left(1 + \frac{a_2}{a_1}\check{\phi}^{(j)}(z)\right).
\end{align*}
The right-hand side is a continuous function with maximum value $\left(1 + \frac{a_2}{a_1}\right)^{L}$.  Hence, by selecting $L_0$ such that  $\left(1 + \frac{a_2}{a_1}\right)^{L_0} > R$, we can then pick $z_0$ such that $\psi(z) > R$ for all $z > z_0$. Hence, we conclude that 
\begin{align*}
    \lim\limits_{L \to \infty} \frac{K^{(L)}(z)}{a_1^{L} (L+1)} = \psi(z),
\end{align*}
where $\psi(z)$ can be written as a convergent power series with non-negative coefficients on $[0, 1)$ with a singularity at $z = 1$, which completes the proof.
\end{proof}

We will now prove Theorem \ref{theorem: NTK singular kernel} by establishing the order of the singularity of $\psi$ from Lemma \ref{lemma: convergence to singular kernel}.  To characterize the order of this singularity, we will generally characterize the order of the singularity arising from iterating functions on the interval $[0, 1]$.  In particular, we begin by establishing the order of the singularity of the normalized iteration of a function that is linear around $x = 1$.  
\begin{lemma}
\label{lemma: NNGP linear around 1}
Let 
\begin{align*}
f(z) = \begin{cases} g(z) ~~ \text{if $z \in [0, d]$} \\ 1 - b(1 - z) ~~ \text{if $z \in (d, 1]$}  \end{cases},
\end{align*}
with $d < 1$ such that $f(z)$ is strictly monotonically increasing and $g(z)$ can be written as a convergent power series with non-negative coefficients with $g(0) = 0$, $g'(0) = a < 1$,  and $b > 1$.  Then for $z \in (d, 1]$, it holds that
$$\lim_{L \to \infty} \frac{f^{(L)}(z)}{a^{L}} = \frac{R(z)}{(1 - z)^{-\log_b a}},$$
where $R(z)$ is non-negative for $z \in [0, 1]$, bounded from above, and bounded away from $0$ around $z = 1$.
\end{lemma}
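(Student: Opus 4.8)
The plan is to mirror the proof of Lemma~\ref{lemma: piecewise linear iteration}, replacing the exact slope-$a$ behaviour on $[0,d]$ by a Koenigs-type linearisation of the power series $g$ at its attracting fixed point $0$. Since $g(0)=0$ and $g'(0)=a\in(0,1)$, the origin is an attracting fixed point of $f$, so for every $z\in(d,1)$ there is a smallest integer $L_0=L_0(z)$ with $f^{(L_0)}(z)\le d$; for $k<L_0$ all iterates lie in $(d,1]$, where $f$ equals the affine map $t\mapsto 1-b(1-t)$. Iterating that affine map gives $1-f^{(k)}(z)=b^{k}(1-z)$ for $0\le k\le L_0$, hence $L_0=\lceil\log_b((1-d)/(1-z))\rceil$ and, exactly as in Lemma~\ref{lemma: piecewise linear iteration},
\begin{align*}
a^{-L_0}\in\Big[\big((1-d)/(1-z)\big)^{-\log_b a},\ \frac{1}{a}\big((1-d)/(1-z)\big)^{-\log_b a}\Big).
\end{align*}
Setting $w_z:=f^{(L_0)}(z)$, monotonicity of $f$ together with $b>1$ forces $0<ad\le g(d)\le 1-b(1-d)<d$, and since $f^{(L_0-1)}(z)>d$ it follows that $w_z\in(1-b(1-d),\,d]$; thus $w_z$ remains in a fixed compact subinterval $[\delta,d]\subset(0,1)$, with $\delta:=1-b(1-d)>0$, as $z\to1$.

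The second ingredient is a linearising limit for $g$ on $[0,d]$. Because $g$ is increasing with $g(0)=0$ and $g(d)<d$, it maps $[0,d]$ into itself and $g^{(j)}(w)\le w\,r^{j}$ with $r:=g(d)/d<1$; writing $g(s)/(as)=1+\rho(s)$ gives $0\le\rho(s)\le C's$ on $[0,d]$ with $C':=(g(d)-ad)/(ad^{2})<\infty$, so $\sum_{j\ge0}\rho(g^{(j)}(w))\le C'd/(1-r)<\infty$ uniformly in $w\in[0,d]$. By the Cauchy criterion for infinite products (as used in Lemma~\ref{lemma: convergence to singular kernel}),
\begin{align*}
\kappa(w):=\lim_{k\to\infty}\frac{g^{(k)}(w)}{a^{k}}=w\prod_{j=0}^{\infty}\frac{g(g^{(j)}(w))}{a\,g^{(j)}(w)}
\end{align*}
exists with convergence uniform on $[0,d]$, and the same bounds show $\kappa$ is continuous on $[0,d]$, $\kappa(0)=0$, and $w\le\kappa(w)\le w\,e^{C'd/(1-r)}$, so $\kappa$ is bounded above and bounded away from $0$ on $[\delta,d]$.

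Combining the two pieces proves the lemma. For fixed $z\in(d,1)$ both $L_0(z)$ and $w_z\in(\delta,d]$ are fixed, and since $f$ agrees with $g$ on the $g$-invariant set $[0,d]$ we have $f^{(L)}(z)=g^{(L-L_0)}(w_z)$ for $L\ge L_0$, whence
\begin{align*}
\lim_{L\to\infty}\frac{f^{(L)}(z)}{a^{L}}=a^{-L_0(z)}\lim_{L\to\infty}\frac{g^{(L-L_0)}(w_z)}{a^{L-L_0}}=a^{-L_0(z)}\,\kappa(w_z).
\end{align*}
Define $R(z):=a^{-L_0(z)}\kappa(w_z)(1-z)^{-\log_b a}$ on $(d,1)$ (and $R(z):=(1-z)^{-\log_b a}\kappa(z)$ on $[0,d]$, which extends it continuously); then $f^{(L)}(z)/a^{L}\to R(z)/(1-z)^{-\log_b a}$, and $R\ge0$. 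For $z$ near $1$, the factor $a^{-L_0(z)}(1-z)^{-\log_b a}$ lies in $[(1-d)^{-\log_b a},\frac{1}{a}(1-d)^{-\log_b a})$ while $\kappa(w_z)\in[\min_{[\delta,d]}\kappa,\max_{[\delta,d]}\kappa]$, so $R$ is bounded above and bounded away from $0$ around $z=1$, as claimed.

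The only step that goes genuinely beyond Lemma~\ref{lemma: piecewise linear iteration} is constructing the linearisation $\kappa$: showing that $g^{(k)}(w)/a^{k}$ converges \emph{uniformly} in $w$ to a continuous function that is strictly positive on $(0,d]$. The remaining work — extracting $L_0(z)$ from the affine piece, confining $w_z$ to a fixed compact subinterval of $(0,1)$ (this is where monotonicity of $f$ and $b>1$ are essential), and the bookkeeping defining $R$ — is a routine adaptation of the piecewise-linear argument.
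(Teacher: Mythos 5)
Your proof is correct and follows essentially the same route as the paper's: decompose the iteration via the exit time $L_0(z)$ from the affine piece, extract $a^{-L_0(z)} \asymp (1-z)^{\log_b a}$, and combine with the convergence of $g^{(k)}(w)/a^k$ on $[0,d]$ (which the paper obtains by citing the infinite-product argument of Lemma~\ref{lemma: convergence to singular kernel}, and which you re-derive directly as a Koenigs-type linearisation). Your additional observation that the landing point $w_z$ stays in the fixed compact interval $[1-b(1-d),\,d]$ is a worthwhile refinement that makes explicit why the limit is bounded away from zero uniformly as $z\to 1$.
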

\begin{proof}
We first visualize the curve $f(z)$ in Fig.~\ref{fig: Bounding Curves}a.  For any $z \in (d, 1]$, let $L_0(x)$ denote the smallest number of iterations until $f^{(L)}(z) = z' \leq d$.  Then for $z \in (d, 1)$, we have that
\begin{align*}
    \lim_{L \to \infty} \frac{f^{(L)}(z)}{a^{L}} &= \lim_{L \to \infty} \frac{f^{(L - L_0(z))}(z')}{a^{L - L_0(z)}} a^{- L_0(z)}.
\end{align*}
Now by the proof of Lemma \ref{lemma: convergence to singular kernel}, we know that 
\begin{align*}
    \lim_{L \to \infty} \frac{f^{(L - L_0(z))}(z')}{a^{L - L_0(z)}} = \tilde{R}(z') ~,
\end{align*}
with $\tilde{R}(z') \geq z'$.  Thus, we need only analyze the term  $a^{-L_0(z)}$ to determine the pole order.  In particular, we have that $L_0(z)$ is the least integer that satisfies: 
\begin{align*}
    1 - b^{L_0(z)}(1 - z) \leq d.
\end{align*}
Hence, $L_0(z)$ is given by: 
\begin{align*}
    L_0(z) &= \left\lceil \log_b\left(\frac{1 - d}{1 - z} \right)~ \right\rceil \\
    &= \left\lceil \log_a\left(\frac{1 - d}{1 - z} \right)^{\frac{1}{\log_a b}}~ \right\rceil \\
    &= \left\lceil \log_a\left(\frac{1 - z}{1 - d} \right)^{-\log_b a}~ \right\rceil \\
    &\in \left[ \log_a\left(\frac{1 - z}{1 - d} \right)^{-\log_b a}~ , ~~\log_a\left(\frac{1 - z}{1 - d} \right)^{-\log_b a} + 1 \right].
\end{align*}
As a consequence,
\begin{align*}
    a^{-L_0(z)} \in \left[ \left(\frac{1 - d}{1 - z} \right)^{-\log_b a}, \frac{1}{a} \left(\frac{1 - d}{1 - z} \right)^{-\log_b a} \right].
\end{align*}
Thus we conclude that for $z \in (d, 1)$: 
\begin{align*}
    \lim_{L \to \infty} \frac{f^{(L)}(z)}{a^{L}} &= \frac{R(z)}{(1 - z)^{-\log_b a}} ~,
\end{align*}
where $R(z)$ is non-negative for $z \in [0, 1]$, bounded from above, and bounded away from $0$ around $z = 1$, which concludes the proof.
\end{proof}

\begin{figure*}[!t]
    \centering
    \includegraphics[width=.9\textwidth]{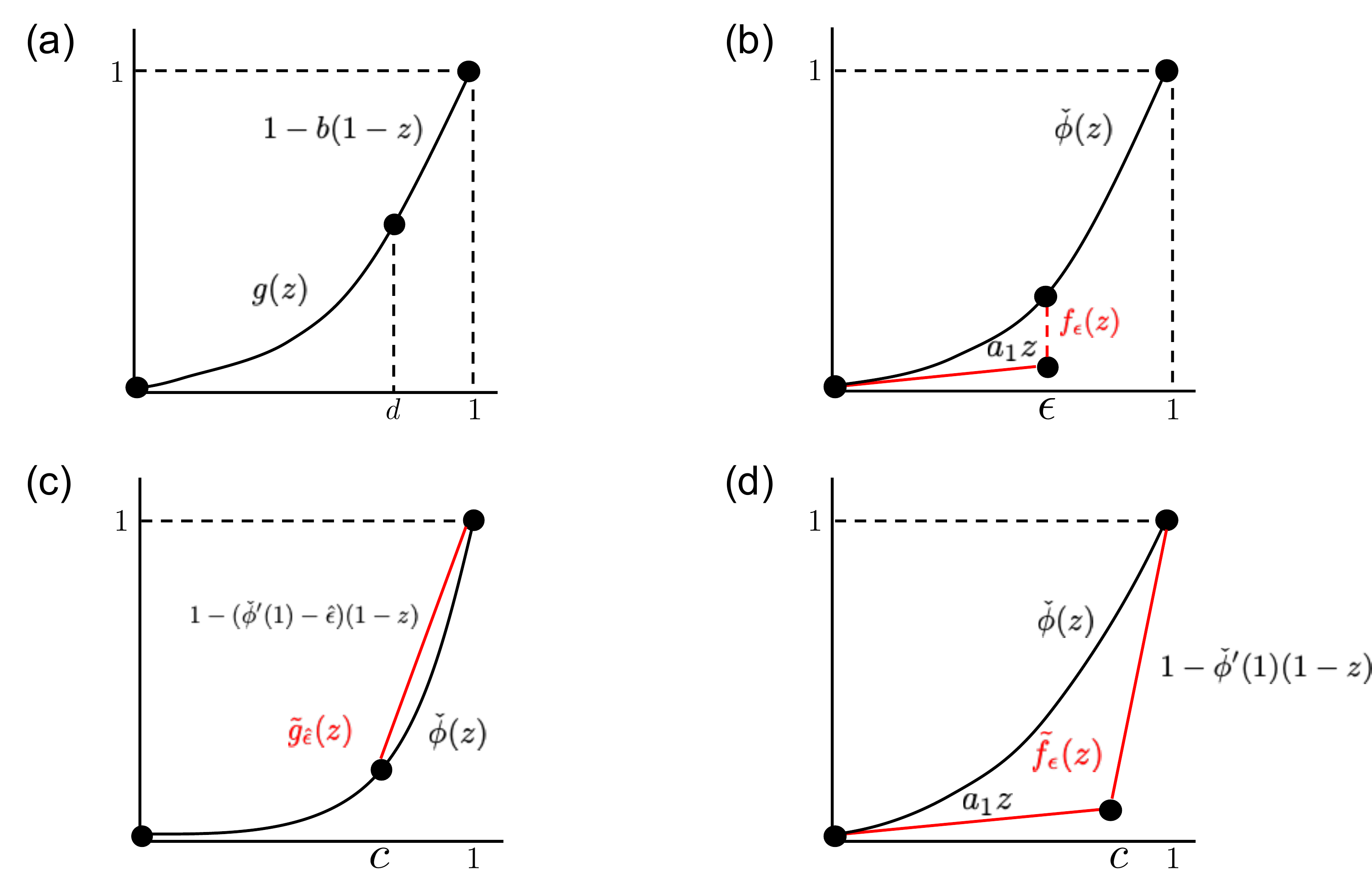}
    \caption{A visualization of the four functions bounding $\check{\phi}(z)$ that are used to prove Theorem \ref{theorem: NTK singular kernel}.}
    \label{fig: Bounding Curves}
\end{figure*}

We will now utilize Lemma \ref{lemma: NNGP linear around 1} to prove Theorem \ref{theorem: NTK singular kernel}. 

\begin{proof}
Let $\check{\phi}(z) = \sum_{i=1}^{\infty}a_i z^{i}$. We will lower bound the dual activation $\check{\phi}$ and its derivative by the piecewise functions: 
\begin{align*}
    f_{\epsilon}(z) = \begin{cases} a_1 z ~~ \text{if $z \in [0, \epsilon)$}  \\ \check{\phi}(z) ~~ \text{if $z \in [\epsilon, 1]$}  \end{cases} ~~\textrm{and} ~~~ h_{\epsilon}(z) = \begin{cases} a_1 ~~ \text{if $z \in [0, \epsilon)$}  \\ \check{\phi}'(z) ~~ \text{if $z \in [\epsilon, 1]$}  \end{cases}.
\end{align*}
The function $f_{\epsilon}(z)$ is visualized in Fig.~\ref{fig: Bounding Curves}b.  Now consider the function $k_{\epsilon}^{(L)}(z)$ defined as follows:
\begin{align*}
    k_{\epsilon}^{(L)}(z) = k_{\epsilon}^{(L-1)}(z) h_{\epsilon}(f_{\epsilon}^{(L-1)}(z)) + f_{\epsilon}^{(L)}(z).
\end{align*}
By definition, we have that $K^{(L)}(z) \geq k_{\epsilon}^{(L)}(z)$ for all $z \in [0, 1]$.  We will now show that for any $\tilde{\epsilon}$, we can select $k_{\epsilon}$ such that
\begin{align}
\label{eq: approximation of NTK by lower bound}
    \lim\limits_{L \to \infty}  \frac{K^{(L)}(z) - k_{\epsilon}^{(L)}(z)}{a_1^{L}(L+1)}  < \tilde{\epsilon}.
\end{align}
To prove Eq.~\eqref{eq: approximation of NTK by lower bound}, we first consider the updates for $L > L_0$ where $L_0$ is the largest integer such that $k^{(L_0)}(z) = K^{(L_0)}(z)$, $h_{\epsilon}(f_{\epsilon}^{(L_0+1)}(z)) = a_1$, and $f_{\epsilon}^{(L_0)}(z) = \check{\phi}^{(L_0)}(z)$.  We will first prove inductively that for $T \in \mathbb{N}$:
\begin{align}
\label{eq: NNGP approximation}
    \check{\phi}^{(L_0 + T)}(z) - f_{\epsilon}^{(L_0 + T)}(z) \leq C_1(z) \left( \sum_{i=0}^{T - 1} a_1^{2L_0 + T - 1 +  i}\right),
\end{align}
where $C_1(z)$ is a term independent of $T$. We begin with the base case of $T = 1$.  Namely, we have for $z \in (\epsilon, 1)$: 
\begin{align*}
     \check{\phi}^{(L_0 + 1)}(z) - f_{\epsilon}^{(L_0 + 1)}(z) &\leq \sum_{i=1}^{\infty} a_i \left(\check{\phi}^{(L_0)}(z) \right)^i - a_1  f_{\epsilon}^{(L_0)}(z) \\
     &= \sum_{i=2}^{\infty} a_i \left(\check{\phi}^{(L_0)}(z) \right)^i ~~~~ \left(\text{since $f_{\epsilon}^{(L_0)}(z) = \check{\phi}^{(L_0)}(z)$}\right) \\ 
     &\leq a_1^{2L_0} \tilde{C}_1(z) \left( 1 - a_1\right)  ~~~~ \left( \text{where $\tilde{C}_1(z)= \left(\lim\limits_{L \to \infty}\frac{\check{\phi}^{(L)}(z)}{a_1^{L}}\right)^2$}\right) \\
     &= C_1(z) a_1^{2L_0}, 
\end{align*}
which concludes the base case.  Now assume the statement is true for $T = T_0$.  Then for $T = T_0 + 1$, we have: 
\begin{align*}
 \check{\phi}^{(L_0 + T_0 + 1)}(z) - f_{\epsilon}^{(L_0 + T_0 + 1)}(z) &= \sum_{i=1}^{\infty} a_i \left(\check{\phi}^{(L_0 + T_0)}(z) \right)^i - a_1  f_{\epsilon}^{(L_0 + T_0)}(z) \\
 &\leq C_1(z) \left( \sum_{i=0}^{T_0 - 1} a_1^{2L_0 + T_0 +  i}\right) + \sum_{i=2}^{\infty}   a_i \left(\check{\phi}^{(L_0 + T_0)}(z) \right)^i \\
 &\leq  C_1(z) \left( \sum_{i=0}^{T_0 - 1} a_1^{2L_0 + T_0 +  i}\right) + C_1(z) a_1^{2L_0 + 2T_0} \\
 &= C_1(z) \left( \sum_{i=0}^{T_0} a_1^{2L_0 + T_0 +  i}\right), 
\end{align*}
which concludes the proof by induction.  We will next prove inductively that for $T \in \mathbb{N}$: 
\begin{align}
\label{eq: kernel difference}
    K^{(L_0 + T)}(z) - k_{\epsilon}^{(L_0 + T)}(z)  \leq C_1(z) \left( \sum_{i =0}^{T-1} (T-i) a_1^{2L_0 + T - 1 + i} \right) + C_2(z) \left(\sum_{i=0}^{T-2} (L_0 + i + 2) a_1^{2L_0 + T + i} \right),
\end{align}
where $C_1(z), C_2(z)$ are terms independent of $T$.  We begin with the base case of $T = 1$.  Namely, we have for $z \in (\epsilon, 1)$: 
\begin{align*}
     K^{(L_0 + 1)}(z) - k_{\epsilon}^{(L_0 + 1)}(z) &\leq [K^{(L_0)}(z) \check{\phi}'(\check{\phi}^{(L_0)}(z)) - k_{\epsilon}^{(L_0)}(z) h_{\epsilon}(f_{\epsilon}^{(L_0)}(z))] +  [\check{\phi}^{(L_0 + 1)}(z) - f_{\epsilon}^{(L_0+1)}(z)] \\
     &=  \check{\phi}^{(L_0 + 1)}(z) - f_{\epsilon}^{(L_0+1)}(z) \\
    &\leq C_1(z) a_1^{2L_0} ~~ \left(\text{by Eq.~\eqref{eq: NNGP approximation}}\right),
\end{align*}
which concludes the base case.  Now, assume that Eq.~\eqref{eq: kernel difference} holds for $T = T_0$.  Then for $T = T_0 + 1$, we have: 
\begin{align*}
    K^{(L_0 + T_0 + 1)}(z) - k_{\epsilon}^{(L_0 + T_0 + 1)}(z) &\leq [K^{(L_0 + T_0)}(z) \check{\phi}'(\check{\phi}^{(L_0 + T_0)}(z)) - k_{\epsilon}^{(L_0 + T_0)}(z) h_{\epsilon}(f_{\epsilon}^{(L_0 + T_0)}(z))] \\
    & \hspace{8mm} +  [\check{\phi}^{(L_0 + T_0 + 1)}(z) - f_{\epsilon}^{(L_0+T_0 + 1)}(z)] \\
    &= \left[K^{(L_0 + T_0)}(z) \sum_{i=1}^{\infty} i a_i (\check{\phi}^{(L_0 + T_0)}(z))^{i-1} - a_1 k_{\epsilon}^{(L_0 + T_0)}(z) \right] \\
    & \hspace{8mm} + \left[\check{\phi}^{(L_0 + T_0 + 1)}(z) - f_{\epsilon}^{(L_0+T_0 + 1)}(z)\right].
\end{align*}
Next we simplify each term in brackets via the inductive hypothesis.  Let  
\begin{align*}
    S_1 = \left[K^{(L_0 + T_0)}(z) \sum_{i=1}^{\infty} i a_i (\check{\phi}^{(L_0 + T_0)}(z))^{i-1} - a_1 k_{\epsilon}^{(L_0 + T_0)}(z) \right].
\end{align*}
Then, given $\check{\phi}'(1) = C < \infty$, for 
$$C_2(z) = \left(C - a_1\right) \lim\limits_{L \to \infty} \frac{K^{(L)}(z)}{a_1^{L} (L+1)} \lim\limits_{L \to \infty} \frac{\check{\phi}^{(L)}(z)}{a_1^{L}} ~, $$
which is finite by Lemma \ref{lemma: convergence to singular kernel}, we have: 
\begin{align*}
    S_1 &\leq C_1(z) \left( \sum_{i =0}^{T_0-1} (T_0 - i) a_1^{2L_0 + T_0 + i} \right) + C_2(z)  \left(\sum_{i=0}^{T_0-2} (L_0 + i + 2) a_1^{2L_0 + T_0 + 1 + i} \right) \\
    & \hspace{8mm} + K^{(L_0 + T_0)}(z)\sum_{i=2}^{\infty} i a_i (\check{\phi}^{(L_0 + T_0)}(z))^{i-1} \\
    &\leq  C_1(z) \left( \sum_{i =0}^{T_0-1} (T_0 - i)  a_1^{2L_0 + T_0  + i} \right) + C_2(z) \left(\sum_{i=0}^{T_0-2}(L_0 + i + 2)  a_1^{2L_0 + T_0 + 1 + i} \right) \\
    & \hspace{8mm} + C_2(z) a_1^{L_0 + T_0} (L_0 + T_0 + 1) a_1^{L_0 + T_0} \\
    &\leq C_1(z) \left( \sum_{i =0}^{T_0-1} (T_0 - i) a_1^{2L_0 + T_0 + i} \right) + C_2(z)   \left(\sum_{i=0}^{T_0-1} (L_0 + i + 2) a_1^{2L_0 + T_0 + 1 + i} \right).
\end{align*}
Next, let: 
\begin{align*}
S_2 = [\check{\phi}^{(L_0 + T_0 + 1)}(z) - f_{\epsilon}^{(L_0+T_0 + 1)}(z)]. 
\end{align*}
Then, we have by Eq.~\eqref{eq: NNGP approximation} that 
\begin{align*}
    S_2 \leq C_1(z) \left( \sum_{i=0}^{T_0 } a_1^{2L_0 + T_0  +  i}\right).
\end{align*}
Therefore, combining the bounds on $S_1, S_2$, we conclude that 
\begin{align*}
  K^{(L_0 + T_0 + 1)}(z) - k_{\epsilon}^{(L_0 + T_0 + 1)}(z) &\leq S_1 + S_2 \\
  &\leq C_1(z) \left( \sum_{i=0}^{T_0 } a_1^{2L_0 + T_0  +  i}\right) + C_1(z) \left( \sum_{i =0}^{T_0-1} (T_0 - i) a_1^{2L_0 + T_0 + i} \right) \\
  &\hspace{8mm} + C_2(z)   \left(\sum_{i=0}^{T_0-1} (L_0 + i + 2) a_1^{2L_0 + T_0 + 1 + i} \right) \\
  &= C_1(z) \left( \sum_{i =0}^{T_0} (T_0 + 1 - i) a_1^{2L_0 + T_0 + i} \right) + C_2(z)   \left(\sum_{i=0}^{T_0-1} (L_0 + i + 2) a_1^{2L_0 + T_0 + 1 + i} \right),
\end{align*}
which concludes the proof by induction and establishes Eq.~\eqref{eq: kernel difference}.  Next, Eq.~\eqref{eq: kernel difference} implies: 
\begin{align*}
    \frac{K^{(L_0 + T)}(z) - k_{\epsilon}^{(L_0 + T)}(z)}{a_1^{L_0 + T}(L_0 + T + 1)}  &\leq C_1(z) \left( \sum_{i =0}^{T-1} \frac{T-i}{T + L_0 + 1} a_1^{L_0 - 1 + i} \right) + C_2(z) \left(\sum_{i=0}^{T-2} \frac{L_0 + i + 2}{T + L_0 + 1} a_1^{L_0 + i} \right) \\
    &\leq \left( C_1(z) a_1^{L_0 - 1} + C_2(z) a_1^{L_0} \right) \frac{1} {1 - a_1} ~. 
\end{align*}
Hence, since the right-hand side does not depend on $T$, we conclude that 
\begin{align*}
    \lim\limits_{L \to \infty} \frac{K^{(L)}(z) - k_{\epsilon}^{(L)}(z)}{a_1^{L}(L + 1)} \leq \left( C_1(z) a_1^{L_0 - 1} + C_2(z) a_1^{L_0} \right) \frac{1} {1 - a_1}.
\end{align*}
Lastly, note that by selecting $\epsilon$ small enough, we can make $L_0(z)$ arbitrarily large.  Hence, for any fixed $z \in [0, 1]$, we conclude that
\begin{align*}
    \lim_{\epsilon \to 0} \lim\limits_{L \to \infty}  \frac{K^{(L)}(z) - k_{\epsilon}^{(L)}(z)}{a_1^{L}(L+1)} = 0,
\end{align*}
and as a consequence that
\begin{align}
\label{eq: Incomplete upper bound on normalized iterated NTK}
      \lim\limits_{L \to \infty}  \frac{K^{(L)}(z)}{a_1^{L}(L+1)} &= \lim_{\epsilon \to 0} \lim\limits_{L \to \infty}  \frac{K^{(L)}(z) - k_{\epsilon}^{(L)}(z)}{a_1^{L}(L+1)} +  \lim_{\epsilon \to 0} \lim\limits_{L \to \infty} \frac{k_{\epsilon}^{(L)}(z)}{a_1^{L}(L+1)}  \nonumber \\
      &=  \lim_{\epsilon \to 0} \lim\limits_{L \to \infty} \frac{k_{\epsilon}^{(L)}(z)}{a_1^{L}(L+1)}.
\end{align}
By uniformly bounding the right-hand side over $\epsilon$, we will establish an upper bound on the pole order for the iterated, normalized NTK.  To do this, we first show that the iterated, normalized $k_{\epsilon}$ and $f_{\epsilon}$ are equal for $z \in (\epsilon, 1)$.  Let $\alpha(z) = k_{\epsilon}^{(L_0(z))}(z)$ and $\beta(z) = f_{\epsilon}^{(L_0(z))}(z)$ for $z \in (\epsilon, 1)$.  We prove by induction for $T > 0$ that
\begin{align}
\label{eq: Induction for surrogate kernel}
    k_{\epsilon}^{(L_0(z) + T)}(z) = a_1^{T} [\alpha(z) + T \beta(z)].  
\end{align}
The base case for $T = 1$ follows by
\begin{align*}
    k_{\epsilon}^{(L_0(z) + 1)}(z) = a_1 [\alpha(z)]  + a_1 \beta(z).
\end{align*}
Proceeding inductively, we assume that Eq.~\eqref{eq: Induction for surrogate kernel} holds for time $T$.  Then at time $T + 1$, we have 
\begin{align*}
    k_{\epsilon}^{(L_0(z) + T  + 1)}(z) &= a_1 k_{\epsilon}^{L_0(z) + T} + f_{\epsilon}^{L_0 + T + 1}(z) \\
    &= a_1^{T+1} [\alpha(z) + T \beta(z)] + a_1^{T+1} \beta(z) \\
    &= a_1^{T+1} [\alpha(z) + (T+1) \beta(z)], 
\end{align*}
which concludes the proof by induction. Thus, we obtain that 
\begin{align*}
    \lim_{L \to \infty} \frac{k_{\epsilon}^{(L)}(z)}{a_1^{L}(L+1)} &= \lim_{L \to \infty} \frac{k_{\epsilon}^{(L-L_0(z))}(\alpha(z))}{a_1^{L - L_0(z)}(L - L_0(z)+1)} \left(\frac{L - L_0(z) +1}{L + 1}\right) a_1^{-L_0(z)} \\
    & =\lim_{T \to \infty} \frac{a_1^{T} [\alpha(z) + T \beta(z)]}{a_1^{T}(T+1)}  a_1^{-L_0(z)} \\
    &= \beta(z) a_1^{-L_0(z)} \\
    &= \lim_{L \to \infty} \frac{f_{\epsilon}^{(L)}(z)}{a_1^L}.
\end{align*}

Next, we will uniformly bound the iterated, normalized $f_{\epsilon}$.  In particular, since $\check{\phi} \geq f_{\epsilon}$ and the two functions have the same normalizing constant, we obtain
\begin{align*}
    \lim_{L \to \infty}  \frac{f_{\epsilon}^{(L)}(z)}{a_1^L} \leq   \lim_{L \to \infty}  \frac{\check{\phi}^{(L)}(z)}{a_1^L}.
\end{align*}
 Now, we have that for any $\hat{\epsilon}$, $\check{\phi}$ is upper bounded by the function: 
\begin{align*}
    \tilde{g}_{\hat{\epsilon}}(z) = \begin{cases} \check{\phi}(z) ~~ \text{if $z \in [0, c)$}  \\ 1 - (\check{\phi}'(1)-\hat{\epsilon})(1- z) ~~ \text{if $z \in [c, 1]$}  \end{cases},
\end{align*}
where $z = c$ is the intersection of the secant line $1 - (\check{\phi}'(1)-\hat{\epsilon})(1- z)$ and $\check{\phi}$.  We visualize $\tilde{g}_{\hat{\epsilon}}(z)$ in Fig.~\ref{fig: Bounding Curves}c.  By Lemma \ref{lemma: NNGP linear around 1}, we know that for $z \in (c, 1)$:
\begin{align*}
    \lim_{L \to \infty} \frac{\tilde{g}_{\hat{\epsilon}}^{(L)}(z)}{a_1^L} &= \frac{R_{\hat{\epsilon}}(z)}{(1- z)^{-\log_{\check{\phi}'(1) - \hat{\epsilon}} \phi'(0)}},
\end{align*} 
where $R_{\hat{\epsilon}}(z)$ is non-negative for $z \in [0, 1]$, bounded from above, and bounded away from $0$ around $z = 1$.  Since $\hat{\epsilon}$ is arbitrary, we conclude that for some $\epsilon''$, for $z \in (\epsilon'', 1)$: 
\begin{align}
\label{eq: Upper bound on normalized iterated NNGP}
    \lim_{L \to \infty}  \frac{f_{\epsilon}^{(L)}(z)}{a_1^L} \leq   \lim_{L \to \infty}  \frac{\check{\phi}^{(L)}(z)}{a_1^L} \leq \frac{R_1(z)}{(1 - z)^{-\log_{\check{\phi}'(1)} \check{\phi}'(0)}},
\end{align}
where $R_1(z)$ is non-negative for $z \in [0, 1]$, bounded from above, and bounded away from $0$ around $z = 1$.  By substituting back the above inequalities into Eq.~\eqref{eq: Incomplete upper bound on normalized iterated NTK}, we conclude that
\begin{align}
\label{eq: Upper bound on normalized iterated NTK}
    \lim\limits_{L \to \infty}  \frac{K^{(L)}(z)}{a_1^{L}(L+1)} &= \lim_{\epsilon \to 0} \lim\limits_{L \to \infty}  \frac{K^{(L)}(z) - k_{\epsilon}^{(L)}(z)}{a_1^{L}(L+1)} +  \lim_{\epsilon \to 0} \lim\limits_{L \to \infty} \frac{k_{\epsilon}^{(L)}(z)}{a_1^{L}(L+1)} \leq \frac{R_1(z)}{(1 - z)^{ -\log_{\check{\phi}'(1)} \check{\phi}'(0)}}.
\end{align}

To conclude the proof, we need to establish a similar lower bound on the above limit.  We will construct the lower bound by first establishing the order of the singularity of the iteration of $\check{\phi}$ and then showing that this order is a lower bound on the order of the singularity for the iterated, normalized NTK.  Note that we have already established an upper bound on the order of the singularity of the iteration of $\check{\phi}$ in Eq.~\eqref{eq: Upper bound on normalized iterated NNGP}.  Now, we alternatively lower bound $\check{\phi}$ via the following function: 
\begin{align*}
    \tilde{f}(z) = \begin{cases} a_1 z ~~ \text{if $x \in [0, c)$}  \\ 1 - \check{\phi}'(1)(1- z) ~~ \text{if $x \in [c, 1]$}  \end{cases},
\end{align*}
where $z = c$ corresponds to the intersection of the tangent lines of $\check{\phi}$ at $z = 0$ and $z = 1$.  We visualize $\tilde{f}(z)$ in Fig.~\ref{fig: Bounding Curves}d.  By Lemma \ref{lemma: NNGP linear around 1}, we have that for $z \in (c, 1)$:
\begin{align*}
    \lim\limits_{L \to \infty}  \frac{\check{\phi}^{(L)}(z)}{a_1^{L}} \geq \lim\limits_{L \to \infty}  \frac{\tilde{f}^{(L)}(z)}{a_1^{L}} = \frac{Q(z)}{(1 - z)^{-\log_{\check{\phi}'(1)} \check{\phi}'(0)}},
\end{align*}
where $Q(z)$ is non-negative for $z \in [0, 1]$, bounded from above, and bounded away from $0$ around $z = 1$. Hence, we conclude that: 
\begin{align*}
    \lim\limits_{L \to \infty}  \frac{\check{\phi}^{(L)}(z)}{a_1^{L}} = \frac{R_2(z)}{(1 - z)^{-\log_{\check{\phi}'(1)} \check{\phi}'(0)}}, 
\end{align*}
where $R_2(z)$ is non-negative for $z \in [0, 1]$, bounded from above, and bounded away from $0$ around $z = 1$.  Lastly, we utilize Eq.~\eqref{eq: Closed form for NTK} to show that: 
\begin{align*}
    \lim\limits_{L \to \infty}  \frac{\check{\phi}^{(L)}(z)}{a_1^{L}} \leq  \lim\limits_{L \to \infty}  \frac{K^{(L)}(z)}{a_1^{L}(L+1)}.
\end{align*}
In particular, Eq.~\eqref{eq: Closed form for NTK} states that 
\begin{align*}
    K^{(L)}(z)= \sum_{i=0}^{L} \check{\phi}^{(i)}(z) \prod_{k=i}^{L-1} \check{\phi}'\left(\check{\phi}^{(k)}(z)\right).
\end{align*}
We next write $\check{\phi}^{(i)}(z)$ as a product and substitute the computed product back into Eq.~\eqref{eq: Closed form for NTK}.  Namely, using the power series representation for $\check{\phi}$ and unrolling the iteration, we obtain: 
\begin{align*}
    \check{\phi}^{(i)}(z) &= \sum_{j=1}^{\infty} a_j \left(\check{\phi}^{(i-1)}(z)\right)^{j} \\
    &= a_1 \check{\phi}^{(i-1)}(z) \left (1 + \sum_{j=2}^{\infty} \frac{a_j}{a_1} \left(\check{\phi}^{(i-1)}(z)\right)^{j} \right) \\
    &= a_1^{i} z \prod_{k=0}^{i - 1} \left( 1 + \sum_{j=2}^{\infty} \frac{a_j}{a_1} \left(\check{\phi}^{(k)}(z)\right)^{j} \right).
\end{align*}
We similarly use the power series for $\check{\phi}'(z)$ to conclude that
\begin{align*}
    \check{\phi}'\left(\check{\phi}^{(k)}(z)\right) &= \sum_{j=1}^{\infty} j a_j \left(\check{\phi}^{(k)}(z)\right)^{j-1} \\
    &\geq \sum_{j=1}^{\infty} a_j \left(\check{\phi}^{(k)}(z)\right)^{j-1} \\
    &\geq a_1 \left(1 + \sum_{j=2}^{\infty} \frac{a_j}{a_1} \left(\check{\phi}^{(k)}(z)\right)^{j-1} \right) \\
    &\geq a_1 \left(1 + \sum_{j=2}^{\infty} \frac{a_j}{a_1} \left(\check{\phi}^{(k)}(z)\right)^{j} \right) ~~ \left(\text{as $\check{\phi}^{(k)}(z) \leq 1$.} \right). 
\end{align*}
Therefore, we can simplify Eq.~\eqref{eq: Closed form for NTK} as follows: 
\begin{align*}
      K^{(L)}(z) &= \sum_{i=0}^{L} \check{\phi}^{(i)}(z) \prod_{k=i}^{L-1} \check{\phi}'\left(\check{\phi}^{(k)}(z)\right) \\
      &\geq \sum_{i=0}^{L} a_1^{i} z \prod_{k=0}^{i - 1} \left( 1 + \sum_{j=2}^{\infty} \frac{a_j}{a_1} \left(\check{\phi}^{(k)}(z)\right)^{j} \right) \prod_{k'=i}^{L-1} a_1 \left(1 + \sum_{j=2}^{\infty} \frac{a_j}{a_1} \left(\check{\phi}^{(k')}(z)\right)^{j} \right) \\
      &= \sum_{i=0}^{L} a_1^{L} z \prod_{k=0}^{L - 1} \left( 1 + \sum_{j=2}^{\infty} \frac{a_j}{a_1} \left(\check{\phi}^{(k)}(z)\right)^{j} \right) \\
      &= (L+1) \check{\phi}^{(L)}(z),
\end{align*}
and conclude that
\begin{align*}
    \frac{K^{(L)}(z)}{a_1^{L} (L + 1)} \geq \frac{\check{\phi}^{(L)}(z)}{a_1^{L}}. 
\end{align*}
As a consequence,
\begin{align}
\label{eq: Lower bound on normalized iterated NTK}
    \lim\limits_{L \to \infty}  \frac{K^{(L)}(z)}{a_1^{L}(L+1)} \geq \lim\limits_{L \to \infty}  \frac{\check{\phi}^{(L)}(z)}{a_1^{L}} = \frac{R_2(z)}{(1 - z)^{-\log_{\check{\phi}'(1)} \check{\phi}'(0)}}.
\end{align}
Lastly, we combine Eq.~\eqref{eq: Upper bound on normalized iterated NTK} and \eqref{eq: Lower bound on normalized iterated NTK} to conclude that there exists some $\epsilon$ such that for $z \in (\epsilon, 1)$: 
\begin{align*}
   \frac{R_2(z)}{(1 - z)^{-\log_{\check{\phi}'(1)} \check{\phi}'(0)}} \leq \lim\limits_{L \to \infty}  \frac{K^{(L)}(z)}{a_1^{L}(L+1)} \leq \frac{R_1(z)}{(1 - z)^{-\log_{\check{\phi}'(1)} \check{\phi}'(0)}}~. 
\end{align*}
Thus, we conclude that
\begin{align*}
    \lim\limits_{L \to \infty}  \frac{K^{(L)}(z)}{a_1^{L}(L+1)} = \frac{R(z)}{(1 - z)^{-\log_{\check{\phi}'(1)} \check{\phi}'(0)}}, 
\end{align*}
where $R(z)$ is non-negative for $z \in [0, 1]$, bounded from above, and bounded away from $0$ around $z = 1$.  This concludes the proof of Theorem \ref{theorem: NTK singular kernel}.
\end{proof}


To prove Theorem \ref{theorem: Optimality of the NTK}, we will use the result of Theorem \ref{theorem: NTK singular kernel} and that of~\cite{DevroyeHilbertKernel}, which analyzes the optimality of singular kernel smoothers.  To connect infinitely wide and deep networks with kernel smoothers, we next prove that the infinite depth limit of the NTK corresponds to a kernel smoother under the conditions of Theorem \ref{theorem: NTK singular kernel}.


\begin{lemma}
\label{lemma: convergence to kernel smoother}
Let $\psi(z) = \lim\limits_{L \to \infty} \frac{K^{(L)}(z)}{a_1^{L} (L+1)}$.  Then under the setting of Theorem \ref{theorem: Optimality of the NTK},  
\begin{align*}
m_n(x) = \sign \left( \sum\limits_{i=1}^{n} y^{(i)} \psi(x^T x^{(i)})\right),    
\end{align*}
assuming $\left|\sum\limits_{i=1}^{n} y^{(i)} \psi(x^T x^{(i)})\right| > 0$ for almost all $x \in \mathcal{S}_+^{d-1}$.  
\end{lemma}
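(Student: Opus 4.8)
The plan is to reduce the kernel machine $\sign\big(y(K_n^{(L)})^{-1}K^{(L)}(X,x)\big)$ to the kernel smoother $\sign\big(\sum_i y^{(i)}\psi(x^Tx^{(i)})\big)$ by tracking the renormalized Gram matrix $\tilde K_n := \frac{1}{a_1^L(L+1)}K_n^{(L)}$ and the renormalized vector $\tilde v := \frac{1}{a_1^L(L+1)}K^{(L)}(X,x)$ as $L\to\infty$. First I would note that since $a_1^L(L+1)>0$, rescaling both quantities by this common positive factor leaves the predictor unchanged: $y(K_n^{(L)})^{-1}K^{(L)}(X,x)=y\tilde K_n^{-1}\tilde v$. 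The hypotheses of Theorem~\ref{theorem: NTK singular kernel} (in dual form) hold here — $\check\phi(0)=A^2=0$, $\check\phi(1)=1$, $\check\phi'(0)=(A')^2\in(0,1)$ (positivity from $A'\neq0$, the upper bound from Lemma~\ref{lemma: 1st derivative of dual values}, which applies as $\check\phi$ is not the identity map), and $\check\phi'(1)=B'<\infty$ — so Lemma~\ref{lemma: convergence to singular kernel} gives $\tilde v_i\to\psi(x^Tx^{(i)})$, finite for almost every $x$ (those with $x\neq x^{(i)}$ for all $i$, hence $x^Tx^{(i)}<1$), and each off-diagonal entry $(\tilde K_n)_{ij}\to\psi(x^{(i)T}x^{(j)})<\infty$, using that the training points are a.s.\ distinct.

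The crucial step is to isolate the diagonal of $\tilde K_n$. Since $x^{(i)}\in\mathcal{S}_+^d$, every diagonal entry of $K_n^{(L)}$ equals $K^{(L)}(1)$; by the closed form in Eq.~\eqref{eq: Closed form for NTK}, using $\check\phi^{(i)}(1)=1$ and $\check\phi'(1)=B'$, this is $K^{(L)}(1)=\sum_{k=0}^{L}B'^{k}$. Convexity of $\check\phi$ on $[0,1]$ together with $\check\phi(0)=0$, $\check\phi(1)=1$, and $\check\phi'(0)=a_1<1$ forces $B'=\check\phi'(1)>1$, so the common diagonal value $\delta_L := \frac{K^{(L)}(1)}{a_1^L(L+1)}\to\infty$ because $B'>1>a_1$. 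Thus $\tilde K_n=\delta_L I+F^{(L)}$ with $F^{(L)}$ having zero diagonal and entries converging to $\psi(x^{(i)T}x^{(j)})$, so $\sup_L\|F^{(L)}\|<\infty$. For $L$ large enough that $\delta_L^{-1}\|F^{(L)}\|<1$, the matrix $\tilde K_n$ is invertible with $\tilde K_n^{-1}=\delta_L^{-1}(I+\delta_L^{-1}F^{(L)})^{-1}$, and $(I+\delta_L^{-1}F^{(L)})^{-1}\to I$ by the Neumann series.

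Combining these, $y(K_n^{(L)})^{-1}K^{(L)}(X,x)=\delta_L^{-1}\,y(I+\delta_L^{-1}F^{(L)})^{-1}\tilde v$, and the vector-matrix-vector product converges to $\sum_{i=1}^{n}y^{(i)}\psi(x^Tx^{(i)})$. Since $\delta_L^{-1}>0$, the sign of the predictor equals $\sign\big(y(I+\delta_L^{-1}F^{(L)})^{-1}\tilde v\big)$, so under the standing assumption $\big|\sum_i y^{(i)}\psi(x^Tx^{(i)})\big|>0$ for almost all $x$, this sign is eventually constant in $L$; hence the limit defining $m_n(x)$ exists and equals $\sign\big(\sum_{i=1}^{n}y^{(i)}\psi(x^Tx^{(i)})\big)$, as claimed.

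The step I expect to be the main obstacle is verifying the separation of scales in the second paragraph: one must confirm that, after the $a_1^L(L+1)$ normalization, the diagonal of $K_n^{(L)}$ genuinely dominates the off-diagonal entries — equivalently, that $K^{(L)}(1)$ grows strictly faster than $a_1^L(L+1)$ — and then control the matrix inverse uniformly in $L$ so as to pass to the limit. The singularity of $\psi$ at $1$ from Theorem~\ref{theorem: NTK singular kernel} is precisely what makes this work: distinct training points keep $x^{(i)T}x^{(j)}$ bounded away from $1$ so the off-diagonal entries stay bounded, while the diagonal entry sits at the pole of $\psi$ and blows up.
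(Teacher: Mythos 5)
Your proposal is correct and follows essentially the same route as the paper: rescale by the positive factor $a_1^L(L+1)$, show the Gram matrix normalized by its (diverging) diagonal converges to the identity so the kernel machine degenerates to the kernel smoother, and use the nondegeneracy assumption to pass the limit through the $\sign$. Your explicit decomposition $\tilde K_n=\delta_L I+F^{(L)}$ with a Neumann-series inversion is just a cleaner rendering of the paper's spectral perturbation bound (and makes explicit the diagonal normalization the paper leaves implicit), so no substantive difference.
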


\begin{proof}
Let $m_n^{(L)}(x)$ be defined as follows: 
\begin{align*}
    m_n^{(L)}(x) = \sign\left( y {\left(K_n^{{(L)}}\right)^{-1}} K^{(L)}(X, x)\right).
\end{align*}
We first note that multiplying the argument to the $\sign$ function by a positive constant does not affect the value.  Hence, we have: 
\begin{align*}
    \lim\limits_{L \to \infty} m_n^{(L)}(x) = \lim\limits_{L \to \infty}  \sign\left(y {\left(K_n^{(L)}\right)}^{-1} \frac{K^{(L)}(X, x)}{a_1^L (L+1)} \right). 
\end{align*}
Now we compare the argument of the $\sign$ function above to the corresponding kernel smoother.  Namely,  we have: 
\begin{align*}
\left| y {\left(K_n^{(L)}\right)}^{-1} \frac{K^{(L)}(X, x)}{a_1^L (L+1)}  - y\frac{K^{(L)}(X, x)}{a_1^L (L+1)} \right| \leq  \|y\|_2 \left\|{\left(K_n^{(L)}\right)}^{-1} - I\right\|_2 \left\| \frac{K^{(L)}(X, x)}{a_1^L (L+1)}\right\|_2,
\end{align*}
where the inequality follows from the Cauchy-Schwarz inequality and $\|Av\|_2 \leq \|A\|_2 \|v\|_2$ for $A \in \mathbb{R}^{n \times n}, v \in \mathbb{R}^{n}$.  Now since $0$ is an attractor for $\check{\phi}$, then for any $h > 0$, there exists $L_1$ such that for $L > L_1$, the spectrum of $\hat{K}^{(L)}$ is contained in $[1 - hn^2, 1 + hn^2]$ by Weyl's inequalities.  Hence, the spectrum of ${\left(K_n^{(L)}\right)}^{-1}$ is contained in  $\left[\frac{1}{1 + hn^2}, \frac{1}{1 - hn^2}\right]$.  Thus, we conclude that
\begin{align*}
    \left\|{\left(K_n^{(L)}\right)}^{-1} - I\right\|_2 \leq \left(\frac{1}{1 - hn^2} - 1\right).
\end{align*}
Hence by selecting $h$ appropriately small, we conclude that for any $\epsilon_1$, there exists $L_1$ such that for $L > L_1$,  $\left\|{\left(K_n^{(L)}\right)}^{-1} - I\right\|_2  < \epsilon_1$.  Next, since $\lim\limits_{L \to \infty} \frac{K^{(L)}(x^{(i)}, x)}{a^L (L+1)} = \psi(x^Tx^{(i)})$, for any $\epsilon_2$, we can select $L_2$ such that for $L > L_2$, 
$$ \left| y\frac{K^{(L)}(X, x)}{a^L (L+1)} - \sum_{i=1}^{n} y^{(i)}\psi(x^T x^{(i)}) \right| < \epsilon_2.$$
Next under the assumption in the lemma, we may thus select $\epsilon_1, \epsilon_2$ small enough such that the argument of $m_n^{(L)}(x)$ is not exactly $0$ for $L > \max(L_1, L_2)$. Thus we can interchange the limit and the $\sign$ function. As a consequence, for any $x \neq x^{(i)}$ for $i \in \{1, 2, \ldots, n\}$ satisfying $\sum_{i=1}^{n} y^{(i)} \psi(x^T x^{(i)}) \neq 0$, we obtain that 
\begin{align*}
    \lim\limits_{L \to \infty} m_n^{(L)}(x) &= \lim\limits_{L \to \infty}  \sign\left(y {\left(K_n^{(L)}\right)}^{-1} \frac{K^{(L)}(X, x)}{a^L (L+1)} \right) \\
    &=  \sign\left( \lim\limits_{L \to \infty}  y {\left(K_n^{(L)}\right)}^{-1} \frac{K^{(L)}(X, x)}{a^L (L+1)} \right) \\
    &= \sign\left( \sum_{i=1}^{n} y^{(i)} \psi(x^T x^{(i)}) \right).
\end{align*}
Lastly, if $x = x^{(i)}$ for some $i \in \{1, 2, \ldots, n\}$, then since $\psi(z)$ has a singularity at $z = 1$,
$$\lim_{L \to \infty} m_n^{(L)}(x) = \sign\left(\lim\limits_{z \to 1} \sum_{i=1}^{n} \frac{1}{\psi(z)} y^{(i)} \psi(x^T x^{(i)}) \right) = \sign(y^{(i)}),$$
which completes the proof.
\end{proof}

We lastly utilize Theorem \ref{theorem: NTK singular kernel}, Lemma \ref{lemma: convergence to kernel smoother}, and the result of~\cite{DevroyeHilbertKernel} to prove Theorem \ref{theorem: Optimality of the NTK} (expressed below in terms of dual activations), which identifies infinitely wide and deep classifiers that achieve optimality.  

\begin{theorem*}
Let $m_n$ denote the classifier in Eq.~\eqref{eq: NTK Classifier} corresponding to training an infinitely wide and deep network with activation function $\phi$ on $n$ training points. Let $m$ denote the Bayes optimal classifier, i.e. $m(x) = \argmax\limits_{\tilde{y} \in \{-1, 1\}} \mathbb{P}\left( y = \tilde{y} | x \right)$.  If the dual activation, $\check{\phi}$ satisfies: 
\begin{enumerate}
\item[1)] $\check{\phi}(0) = 0$, $\check{\phi}(1) = 1$,
\item[2)] $0 < \check{\phi}'(0) < 1$ and $\check{\phi}'(1) < \infty$,
\item[3)] $-\frac{\log(\check{\phi}'(0))}{\log\left(\check{\phi}'(1)\right)} = \frac{d}{2}$,
\end{enumerate}
then $m_n$ satisfies $\lim\limits_{n \to \infty} \mathbb{P}_X\left( \left| m_n(x) -  m(x) \right| > \epsilon \right) = 0$ for almost all $x \in \mathcal{S}_+^{d}$ and for any $\epsilon > 0$.

\end{theorem*}

\begin{proof}
Thus far, we proved that under the conditions of Theorem \ref{theorem: NTK singular kernel}, the classifier $m_n$ corresponds to taking the sign of a kernel smoother using a singular kernel with singularity of order $- \frac{\log(\check{\phi}'(0))}{\log(\check{\phi}'(1))}$.  For data with density in $\mathbb{R}^{d}$, kernel smoothers with singular kernels of the form $K_h(x, \tilde{x}) = \frac{1}{\|x - \tilde{x}\|^d}$ (i.e., the Hilbert estimate) converge to the Bayes optimal classifier in probability for almost all samples as $n \to \infty$~\cite{DevroyeHilbertKernel}.   We note that multiplying $K_h(x, \tilde{x})$ by a non-negative function that is bounded away from $0$ around $1$ and bounded from above such that the kernel is still monotonically increasing also yields optimality in the same sense (see Supplementary Information \ref{appendix: C}).  Returning to our setting, for any $x, \tilde{x} \in \mathcal{S}_+^{d}$, we can re-write the kernel $K_h(x, \tilde{x})$ as
\begin{align*}
    K_h(x, \tilde{x}) = \frac{1}{\|x - \tilde{x}\|^d} = \frac{1}{2^{\frac{d}{2}}(1 - x^T \tilde{x})^{\frac{d}{2}}}.
\end{align*}
The constant $\frac{1}{2^{\frac{d}{2}}}$ again does not affect the $\sign$ function.  Lastly, assumption 3 selects the order of the singularity such that the limiting kernel from Theorem \ref{theorem: NTK singular kernel} can be written up to constant factors as a Hilbert estimate, which concludes the proof of Theorem \ref{theorem: Optimality of the NTK}.  
\end{proof}

\section{Extension of Hilbert estimate optimality from~\cite{DevroyeHilbertKernel}}
\label{appendix: C}

We utilize the following extension of the result from~\cite{DevroyeHilbertKernel} to prove Theorem \ref{theorem: Optimality of the NTK}.  In this section, we follow the notation from \cite{DevroyeHilbertKernel} in our statements and proofs.   

\begin{corollary*}
\label{theorem: Devroye optimality extension}
For $x \in \mathcal{S}_+^{d}$, let $m(x)$ denote the Bayes optimal regressor.  For $x, \tilde{x} \in \mathcal{S}_+^{d}$,  let $K(x^T \tilde{x}) = \frac{R\left( x^T \tilde{x} \right)}{2^{\frac{d}{2}} (1 - x^T \tilde{x})^{\frac{d}{2}}}$, where $R(z) \geq 0$ for $z \in [0, 1]$ is bounded from above, bounded away from $0$ around $z = 1$, and $K(\cdot)$ is monotonically increasing in $[0, 1]$.  Given a dataset $\{X_i, Y_i\}_{i=1}^{n} \subset \mathcal{S}_+^{d} \times \mathbb{R}$, let
\begin{align*}
    m_n(x) = \frac{\sum_{i=1}^{n} Y_i K(x^TX_i)}{\sum_{i=1}^{n}K(x^TX_i)}.
\end{align*}
Let $X$ have any density $f$ on $\mathcal{S}_+^{d}$ and let $Y$ be bounded.  Then, at almost all $x$ with $f(x) > 0$, $m_n(x) \to m(x)$ in probability as $n \to \infty$.  
\end{corollary*}


\begin{proof}
The proof closely follows that of the theorem in~\cite{DevroyeHilbertKernel} with the differences that (1) we map from densities on $\mathcal{S}_+^{d}$ to densities on $\mathbb{R}^{d}$, and (2) we simply verify that the function $R(z)$ does not change the asymptotic analyses of the original proof.  We begin by noting that the kernel $K$ involves chordal distances on the sphere, i.e.,
\begin{align*}
    K(x^T \tilde{x}) = \frac{R\left( x^T \tilde{x} \right)}{\|x - \tilde{x}\|^d}.
\end{align*}
We first define the random variable $W := \|P(x) - P(X)\|^d V_d$, where $V_d$ is the volume of the unit sphere in $d$ dimensions and $P: \mathcal{S}^{d} \to \mathbb{R}^{d}$ is the stereographic projection such that $\mathcal{S}_+^{d}$ maps to a bounded region.    We let $f_P$ denote the density of the points $P(x)$ for $x \in \mathcal{S}_+^d$.  We note that Euclidean distances after stereographic projection can be related to chordal distances, $\|x - X\|$, via the following formula (up to isometries of the sphere): 
\begin{align*}
    \|x - X\|^2 = \frac{\|P(x) - P(X)\|^2}{(1 +  \|P(x)\|^2)(1 + \|P(X)\|^2)}.
\end{align*}
Since we select the projection such that $\|P(x)\| < \infty$ for $x \in \mathcal{S}_+^{d}$, we have that $(1 +  \|P(x)\|^2)(1 + \|P(X)\|^2)$ is bounded and nonzero, i.e., it is again a factor that simply scales the kernel function.  We thus define
\begin{align*}
    Q(x, \tilde{x}) = R(x^T \tilde{x}) (1 + \|P(x)\|^2)^{\frac{d}{2}} (1 + \|P(\tilde{x})\|^2)^{\frac{d}{2}}, 
\end{align*}
which is bounded away from zero for some $\epsilon > 0$ and $x, \tilde{x}$ such that $x^T \tilde{x} > 1 - \epsilon$.  Letting $W_i := V_d\|P(x) - P(X_i)\|^{d}$, the regressor $m_n(x)$ is given by 
\begin{align*}
    m_n(x) = \frac{\sum_{i=1}^{n} Y_i \frac{Q(x, X_i)}{W_i}}{\sum_{i=1}^{n}\frac{Q(x, X_i)}{W_i}}. 
\end{align*}
Hence, we can utilize the proof strategy of~\cite{DevroyeHilbertKernel} for points $P(x)$ in $\mathbb{R}^{d}$.  Namely as in~\cite{DevroyeHilbertKernel}, we analyze the term:  
\begin{align*}
    |m_n(x) - m(x)| \leq \left|\frac{\sum_{i=1}^{n} (Y_i - m(X_i)) \frac{Q(x, X_i)}{W_i}}{\sum_{i=1}^{n} \frac{Q(x, X_i)}{W_i}} \right| + \frac{\sum_{i=1}^{n} |m(X_i) - m(X))| \frac{Q(x,  X_i)}{W_i}}{\sum_{i=1}^{n} \frac{Q(x, X_i)}{W_i}} := I + II.
\end{align*}
To simplify notation, we let $Q_i = Q(x, X_i)$ and we let $\frac{Q_{(i)}}{W_{(i)}}$ denote the $i$\textsuperscript{th} order statistic ordered such that  $W_{(1)} \leq W_{(2)} \ldots \leq W_{(n)}$.  Now, the proof strategy of~\cite{DevroyeHilbertKernel} is to show that the terms $I$ and $II$ respectively converge to $0$ in probability for almost all $x$ as $n \to \infty$.  To prove that $I$ converges in this manner, following the proof of~\cite{DevroyeHilbertKernel}, we have that: 
\begin{align*}
    \mathbb{E}[I^2 | \{X_i\}_{i=1}^n] \leq C_1 \frac{\frac{1}{W_{(1)}}}{\sum_{j=1}^{k} \frac{Q_{(j)}}{W_{(j)}}} \leq C_2 \frac{\frac{1}{W_{(1)}}}{\sum_{j=1}^{k} \frac{1}{W_{(j)}}},
\end{align*}
where $k$ such that $W_{(k)} > V_d \delta^d$ for small $\delta$, and $C_1, C_2 > 0$ are constants since $\{Q_{(j)}\}_{j=1}^{k}$ are non-negative and bounded away from $0$.  Hence, the convergence of $I$ follows directly from the proof of~\cite{DevroyeHilbertKernel}.    To establish the convergence of $II$, we follow the proof of~\cite{DevroyeHilbertKernel} and first establish that
\begin{align*}
    A_n := \frac{\sum_{i \leq \theta n} \frac{Q_{(i)}}{W_{(i)}}}{\sum_{i=1}^{n} \frac{Q_{(i)}}{W_{(i)}}} \to 1 
\end{align*}
in probability as $n \to \infty$, for all $\theta$ fixed in $(0, 1)$.  Let $\chi$ denote the indicator function, and following the notation of~\cite{DevroyeHilbertKernel}, let $U_{(i)}$ denote uniform order statistics.  The work of~\cite{DevroyeHilbertKernel} establishes that for any fixed $\epsilon \in (0, 1)$ there exists $\delta$ such that for all $W_{(i)} \leq V_d \delta^{d}$: 
\begin{align*}
    (1 - \epsilon)f_P(P(x)) W_{(i)} \leq U_{(i)} \leq (1 + \epsilon)f_P(P(x)) W_{(i)}.
\end{align*}
Hence, we consider the event $B =[W_{\lfloor \theta n \rfloor} \leq V_d \delta^d ]$, and then as in the proof of~\cite{DevroyeHilbertKernel}, we obtain
\begin{align*}
    A_n \chi_{B} \geq 1 - \frac{2\epsilon}{1 + \epsilon} - \frac{\frac{n C_3 }{W_{\lfloor \theta n\rfloor}}}{f_P(P(x)) \sum_{i \leq \theta n} \frac{Q_{(i)}}{U_{(i)}}} \geq 1 - \frac{2\epsilon}{1 + \epsilon} - C_4 \frac{\frac{n}{W_{\lfloor \theta n\rfloor}}}{f_P(P(x)) \sum_{i \leq \theta n} \frac{1}{U_{(i)}}}, 
\end{align*}
where $C_3, C_4 > 0$ are constants since $Q_{(i)}$ is bounded and positive for $i \leq \lfloor \theta n \rfloor$.  The convergence of $A_n$ then follows by continuing the proof from~\cite{DevroyeHilbertKernel}.  Next, again following the proof of~\cite{DevroyeHilbertKernel}, for any $\epsilon > 0$, we also select $\delta$ such that: 
\begin{align*}
    \sup_{r \leq \delta} \frac{\int_{S_{P(x), r}} | m(y) - m(x) | f_P(y) dy}{\int_{S_{P(x), r}}  f_P(y) dy} \leq \epsilon,
\end{align*}
where $S_{P(x), r}$ denotes the closed ball in $\mathbb{R}^{d}$ of radius $r$ centered at $P(x)$.  Then as in~\cite{DevroyeHilbertKernel}, select $A = \{y : m(y) - m(x) > \epsilon\}$ and select $\theta \in (0, 1)$ small enough such that $\mathbb{P}(\|P(X_{(\lfloor \theta n \rfloor)}) - P(x)\| > \delta)  \to 0$ as $n \to \infty$.  Then, we have: 
\begin{align*}
    II &= \frac{\sum_{i=1}^{n} |m(X_i) - m(x))| \frac{Q_i}{W_i}}{\sum_{i=1}^{n} \frac{Q_i}{W_i}} \\
    &\leq 2 \frac{\sum_{i > \theta n} \frac{Q_i}{W_i}}{\sum_{i=1}^n \frac{Q_i}{W_i}} + 2 \chi_{\|P(X_{\lfloor \theta n \rfloor}) - P(x)\| > \delta} + \epsilon + \frac{\sum_{i : P(X_i) \in S_{P(x), \delta} \cap A} \frac{Q_i}{W_i}}{\sum_{i=1}^n \frac{Q_i}{W_i}} \\
    &:= V_1 + V_2 + V_3 + V_4.
\end{align*}
  Now as in~\cite{DevroyeHilbertKernel}, we have that $V_1 \to 0$ in probability, as we showed $A_n \to 1$ in probability above. Then, $V_2 \to 0$ in probability and $V_3$ can be made as small as possible by the choice of $\epsilon$.  Lastly, $V_4 \to 0$ since, following the proof of~\cite{DevroyeHilbertKernel}: 
\begin{align*}
    \frac{\sum_{i : P(X_i) \in S_{P(x), \delta} \cap A} \frac{Q_i}{W_i}}{\sum_{i=1}^n \frac{Q_i}{W_i}} &\leq 2\epsilon + C_5 \frac{\frac{1}{W_{(1)}}}{\sum_{i=1}^{n}\frac{Q_{(i)}}{W_{(i)}}},
\end{align*}
where $C_5 > 0$ is a constant.  The above term goes to $0$ in probability by the analysis of part $I$ and the arbitrary choice of $\epsilon$.  This concludes the proof of this extension of the result of~\cite{DevroyeHilbertKernel}. 
\end{proof}

\section{Proof of Corollary \ref{corollary: example optimal NTK}}
\label{appendix: D}

For ease of reading, we repeat Corollary \ref{corollary: example optimal NTK} below. 

\begin{corollary*}
Let $m_n$ denote the classifier in Eq.~\eqref{eq: NTK Classifier} corresponding to training an infinitely wide and deep network with activation function  
\begin{align*}
    \phi(x) &= \begin{cases}
    \frac{1}{12 \sqrt{70}} h_7(x) + \frac{1}{\sqrt{2}} x   & \text{if $d = 1$} \\
    \frac{1}{2^{d/4}} \left(\frac{x^3 - 3x}{\sqrt{6}}\right) + \sqrt{1 - \frac{2}{2^{d/2}}} \left(\frac{x^2 - 1}{\sqrt{2}}\right) +  \frac{1}{2^{d/4}}x & \text{if $d \geq 2$}  \end{cases},
\end{align*} 
where $h_7(x)$ is the $7$\textsuperscript{th} probabilist's Hermite polynomial.\footnote{For $d = 1$, this activation function can be written in closed form as $\frac{x^7 - 21x^5 + 105x^3 + (12 \sqrt{35} -105)x}{12\sqrt{70}}$.} Then the classifier $m_n$ is Bayes optimal.
\end{corollary*}

\begin{proof}
We need only check that $\check{\phi}(z)$ satisfies the conditions of Theorem \ref{theorem: Optimality of the NTK}.  We first consider the case $d \geq 2$.  In particular, since $\frac{x^2 - 1}{\sqrt{2}}$ is the 2nd normalized probabilist's Hermite polynomial and $\frac{x^3 - 3x}{\sqrt{6}}$ is the third normalized probabilist's Hermite polynomial, we have by  \cite[Lemma 11]{DualActivation} that
\begin{align*}
    \check{\phi}(z) = \frac{1}{2^{\frac{d}{2}}} z^3 + \left(1 - \frac{2}{2^{\frac{d}{2}}}\right) z^2 +  \frac{1}{2^{\frac{d}{2}}} z.
\end{align*}
We thus have by direct computation that
\begin{align*}
    \check{\phi}'(1) &= \frac{3}{2^{\frac{d}{2}}} + \left(2 - \frac{4}{2^{\frac{d}{2}}} \right) + \frac{1}{2^{\frac{d}{2}}} = 2 ~~;~~
    \check{\phi}'(0) = \frac{1}{2^{\frac{d}{2}}}, 
\end{align*}
and so, the result follows from Theorem \ref{theorem: Optimality of the NTK} since 
\begin{align*}
    - \log_{\check{\phi}'(1)} \check{\phi}'(0) = \log_2 2^{\frac{d}{2}} = \frac{d}{2}.
\end{align*}
Now for the case of $d = 1$, we have again by \cite[Lemma 11]{DualActivation} that 
\begin{align*}
    \check{\phi}(z) = \frac{z^7}{2} + \frac{z}{2}.
\end{align*}
By direct computation,
\begin{align*}
    \check{\phi}'(1) &= \frac{7}{2} + \frac{1}{2} = 4~~ \textrm{and} ~~~
    \check{\phi}'(0) = \frac{1}{2}. 
\end{align*}
Hence, the result follows from Theorem \ref{theorem: Optimality of the NTK} since
\begin{align*}
    - \log_{\check{\phi}'(1)} \check{\phi}'(0) = \frac{1}{2} = \frac{d}{2}.
\end{align*}
\end{proof}

\section{Proof of Theorem \ref{theorem: NTK 1NN}}
\label{appendix: E}

We repeat Theorem \ref{theorem: NTK 1NN} below in terms of dual activations. 

\begin{theorem*}
Let $m_n$ denote the classifier in Eq.~\eqref{eq: NTK Classifier} corresponding to training an infinitely wide and deep network with activation function $\phi(\cdot)$ on $n$ training examples.  If the dual activation, $\check{\phi}$, satisfies: 
\begin{enumerate}
\item[1)] $\check{\phi}(0) = 0$, $\check{\phi}(1) = 1$,
\item[2)] $\check{\phi}'(0) = 0$, $\check{\phi}'(1) < \infty$~,
\end{enumerate}
then $m_n(x)$ is the 1-NN classifier for $x \in \mathcal{S}_+^{d}$.
\end{theorem*}

\begin{proof}
Let $m_n^{(L)}(x)$ be defined as follows: 
\begin{align*}
    m_n^{(L)}(x) = \sign\left( y {\left(K_n^{{(L)}}\right)^{-1}} K^{(L)}(X, x)\right).
\end{align*}
By the proof of Lemma \ref{lemma: convergence to kernel smoother}, we analogously have that the Gram matrix converges to the identity matrix as depth approaches infinity, i.e. $\lim_{L \to \infty} K_n^{(L)} = I$.  For $x, \tilde{x} \in \mathcal{S}_+^{d}$, let $z = x^T \tilde{x}$ and consider the radial kernel $K^{(L)}(z) = K^{(L)}(x, \tilde{x})$.   Let $\check{\phi}(z) = \sum_{i=2}^{\infty} a_i z^i$ for $a_i \geq 0$, as given by Eq.~\eqref{eq: Analytic formula dual activation}.  Without loss of generality, we assume $a_2 > 0$.  The proof will follow by using induction to establish: 
\begin{align}
\label{eq: 1NN kernel closed form} 
    \check{\phi}^{(L)}(z) = z^{2^L} h_L(z) ~~ \text{and} ~~ K^{(L)}(z) = z^{2^L} g_L(z), 
\end{align}
where $h_L, g_L$ are positive, increasing functions on $(0, 1]$.  The base case follows for $L = 0$ since $\check{\phi}^{(0)}(z) = K^{(0)}(z) = z$.   Hence, we assume the statement is true for $L = T - 1$ and prove the statement for $L = T$. We have 
\begin{align*}
    \check{\phi}^{(T)}(z) &= \check{\phi}\left( \check{\phi}^{(T-1)}(z) \right) = \sum_{i=2}^{\infty} a_i \left(\check{\phi}^{(T-1)}(z)\right)^{i} = \left(\check{\phi}^{(T-1)}(z)\right)^2 \left[\sum_{i=2}^{\infty} a_i \left(\check{\phi}^{(T-1)}(z)\right)^{i-2} \right],
\end{align*}
and hence using the inductive hypothesis, we can conclude that 
\begin{align*}
    \check{\phi}^{(T)}(z) = z^{2^T} h_{T-1}(z)^2 \left[\sum_{i=2}^{\infty} a_i \left(\check{\phi}^{(T-1)}(z)\right)^{i-2} \right] = z^{2^T} h_T(z), 
\end{align*}
where $h_T$ is positive and increasing since $h_{T-1}$ and the term in brackets are positive and increasing.  We proceed similarly for $K^{(T)}$. Namely, we have: 
\begin{align*}
    K^{(T)}(z) &= K^{(T-1)}(z) \check{\phi}'(\check{\phi}^{(T-1)}(z)) + \check{\phi}^{(T)}(z) \\
    &= z^{2^{T-1}} g_{T-1}(z) \left[ \sum_{i=2} i a_i \left( \check{\phi}^{(T-1)}(z)\right)^{i-1} \right] + z^{2^T} h_T(z) \\
    &= z^{2^{T-1}} \check{\phi}^{(T-1)}(z) g_{T-1}(z) \left[ \sum_{i=2} i a_i \left( \check{\phi}^{(T-1)}(z)\right)^{i-2} \right] + z^{2^T} h_T(z) \\
    &= z^{2^T} \left( h_{T-1}(z) g_{T-1}(z)\left[ \sum_{i=2} i a_i \left( \check{\phi}^{(T-1)}(z)\right)^{i-2} \right] + h_T(z)  \right) \\
    &= z^{2^T} g_T(z), 
\end{align*}
where $g_T(z)$ is positive and increasing since $h_T, h_{T-1}, g_{T-1}$ and the term in brackets are positive and increasing, which completes the induction argument. 

Now let $z_i = x^T x^{(i)}$ for $i \in \{1, 2, \ldots, n\}$.  Without loss of generality assume that  $z_1 > z_j$ for all $j \neq 1$.  To show that $\lim_{L \to \infty} m_n^{(L)}(x)$ is equivalent to the 1-NN classifier, we need only show that $\lim_{L \to \infty} m_n^{(L)}(x) = y^{(1)}$.  By Eq.~\eqref{eq: 1NN kernel closed form} for $j \neq 1$, we have that
\begin{align*}
    \lim_{L \to \infty} \frac{K^{(L)}(z_j)}{K^{(L)}(z_1)} &= \lim_{L \to \infty} \frac{  z_j^{2^{L}} g_L(z_j)}{z_1^{2^{L}} g_L(z_1)}  \\
    &\leq \lim_{L \to \infty} \frac{ z_j^{2^{L}}}{z_1^{2^{L}}} ~~ \left( \text{since $z_j < z_1$ and $g_L$ are positive and increasing} \right)\\
    &= 0.
\end{align*}
As a consequence, since $K^{(L)}(z_1) > 0$, we obtain that 
\begin{align*}
    \lim_{L \to \infty} m_n^{(L)}(x) =  \lim_{L \to \infty} \sign\left( y {\left(K_n^{{(L)}}\right)^{-1}} \frac{K^{(L)}(X, x)}{K^{(L)}(x^{(1)}, x)}\right) = y^{(1)}, 
\end{align*}
which establishes that $\lim_{L \to \infty} m_n^{(L)}(x)$ converges to the 1-NN classifier, thereby completing the proof.  
\end{proof}

\section{Proof of Proposition \ref{prop: NTK Majority Vote Condition}}
\label{appendix: F}

We repeat Proposition \ref{prop: NTK Majority Vote Condition} below for ease of reading.

\begin{prop*}
Let $m_n$ denote the classifier in Eq.~\eqref{eq: NTK Classifier} corresponding to training an infinitely wide and deep network with activation function $\phi(\cdot)$ on $n$ training examples. For $x, \tilde{x} \in \mathcal{S}_+^{d}$ with $x \neq \tilde{x}$, if the NTK $K^{(L)}$ satisfies 
\begin{align}
    \label{eq: NTK Majority vote condition}
    \lim_{L \to \infty} \frac{K^{(L)}(x, \tilde{x})}{C(L)} > C_1 \quad \textrm{and } \quad \lim_{L \to \infty} \frac{K^{(L)}(x, \tilde{x})}{C(L)} \neq \lim_{L \to \infty} \frac{K^{(L)}(x, x)}{C(L)}
\end{align}
with $C_1 > 0$ and $0 < C(L) < \infty$ for any $L$, then $m_n$ implements the majority vote classifier, i.e., 
\begin{align*}
    m_n(x) = \sign \Big(\sum_{i=1}^{n} y^{(i)} \Big)~.
\end{align*}
\end{prop*}

\begin{proof}
Let $C_2 =  \lim_{L \to \infty} \frac{K^{(L)}(x, x)}{C(L)}$.  We consider two cases: (1) when $ C_2 = \infty$, and (2) when $C_2 < \infty$.  When $C_2 = \infty$, we have: 
\begin{align*}
    \lim_{L \to \infty} m_n^{(L)}(x) &= \lim_{L \to \infty} \sign\left( y (K_n^{(L)})^{-1} K^{(L)}(X, x) \right) \\
    &= \lim_{L \to \infty} \sign\left( y \left( \frac{K_n^{(L)}}{K^{(L)}(x, x)} \right)^{-1} \frac{K^{(L)}(X, x)}{C(L)} \right) \\
    &= \sign\left( \sum_{i=1}^{n} y^{(i)} C_1 \right) \\
    &= \sign\left( \sum_{i=1}^{n} y^{(i)} \right), 
\end{align*}
which corresponds to the majority vote classifier.  When $C_2 < \infty$, we use the Sherman-Morrison formula to compute the inverse of the Gram matrix $\lim_{L \to \infty} (K_n^{(L)})^{-1}$.  In particular, since the inverse is a continuous map on invertible matrices, 
\begin{align*}
    \lim_{L \to \infty} (K_n^{(L)})^{-1} = \frac{1}{(C_2 - C_1)} I - \frac{C_1}{(C_2-C_1)(C_2 - C_1 + C_1n)} J,
\end{align*}
where $I$ is the identity matrix and $J$ is the all-ones matrix.  Hence, we have that for $x \neq x^{(i)}$ for $i \in \{1, 2, \ldots, n\}$: 
\begin{align*}
    \lim_{L \to \infty} y (K_n^{(L)})^{-1} \frac{K^{(L)}(X, x)}{C(L)} &= y \left(  \frac{1}{(C_2 - C_1)} I - \frac{C_1}{(C_2-C_1)(C_2 - C_1 + C_1n)} J \right) C_1 \mathbf{1} \\
    &= \frac{C_1}{C_2 - C_1 + C_1n} \sum_{i=1}^{n} y^{(i)},
\end{align*}
where $\mathbf{1} \in \mathbb{R}^{n}$ is the all-ones vector.  Assuming that $\sum_{i=1}^{n} y^{(i)} \neq 0$, we can swap the limit and $\sign$ function to conclude that: 
\begin{align*}
     \lim_{L \to \infty} m_n^{(L)}(x) &= \sign \left( \lim_{L \to \infty}  y (K_n^{(L)})^{-1} K^{(L)}(X, x) \right) \\
     &= \sign \left(  \frac{C_1}{C_2 - C_1 + C_1n} \sum_{i=1}^{n} y^{(i)} \right) \\
     &= \sign \left( \sum_{i=1}^{n} y^{(i)} \right),
\end{align*}
which completes the proof.
\end{proof}

\section{Proofs for when Infinitely Wide and Deep Networks are Majority Vote Classifiers}
\label{appendix: G}
The following lemma implies that any activation function satisfying $\check{\phi}(0) > 0$ and $\check{\phi}'(1) > 1$ yields a NTK satisfying Eq.~\eqref{eq: NTK Majority vote condition} and thus, the infinite depth classifier is the majority vote classifier by Proposition \ref{prop: NTK Majority Vote Condition}.

\begin{lemma}
\label{lemma: chaotic phase majority vote}
Let $m_n$ denote the classifier in Eq.~\eqref{eq: NTK Classifier} corresponding to training an infinitely wide and deep network with activation function $\phi(\cdot)$ on $n$ training examples.  If $\check{\phi}$ satisfies: 
\begin{enumerate}
    \item[1)] $\check{\phi}(0) > 0$, $\check{\phi}(1) = 1$,
    \item[2)] $1 < \check{\phi}'(1) < \infty$,
\end{enumerate}
then $m_n$ is the majority vote classifier. 
\end{lemma}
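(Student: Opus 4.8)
The plan is to show that under these hypotheses the infinite-depth NTK is bounded and strictly positive for every pair of \emph{distinct} inputs, yet diverges on the diagonal, so that Proposition~\ref{prop: NTK Majority Vote Condition} applies directly with $C(L)\equiv 1$.

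First I would analyze the iteration of $\check\phi$ on $[0,1]$. By the analyticity result \eqref{eq: Analytic formula dual activation} (and Lemma~11 of \cite{DualActivation}), $\check\phi(z)=\sum_{i\ge 0}a_i z^i$ with $a_i\ge 0$, so $\check\phi$ is convex and non-decreasing on $[0,1]$, with $\check\phi(0)=a_0>0$ and $\check\phi(1)=\sum_i a_i=1$. Setting $h(z):=\check\phi(z)-z$, we have $h(0)>0$, $h(1)=0$, and $h'(1)=\check\phi'(1)-1>0$, so $h<0$ just to the left of $1$; by the intermediate value theorem $h$ has a root $q^*\in(0,1)$, and since $h$ is convex and not affine it has at most two roots, so $q^*$ is the unique root in $[0,1)$ and $h'(q^*)<0$ (a zero with $h'(q^*)=0$ would force $h\ge 0$ on $[0,1]$ by convexity, contradicting $h<0$ near $1$). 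Hence $\lambda:=\check\phi'(q^*)=1+h'(q^*)\in[0,1)$. A monotone-iteration argument then gives $\check\phi^{(L)}(z)\to q^*$ for every $z\in[0,1)$: on $[0,q^*)$ one has $z<\check\phi(z)<q^*$ (monotone increasing, bounded above by $q^*$), on $(q^*,1)$ one has $q^*<\check\phi(z)<z$ (monotone decreasing, bounded below by $q^*$), and in both cases the only fixed point available as a limit is $q^*$; meanwhile $\check\phi^{(L)}(1)=1$ for all $L$.

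Next I would substitute this into the closed form \eqref{eq: Closed form for NTK}, $K^{(L)}(z)=\sum_{i=0}^{L}\check\phi^{(i)}(z)\prod_{j=i}^{L-1}\check\phi'\big(\check\phi^{(j)}(z)\big)$. For fixed $z<1$, reindex by $k=L-i$, writing $K^{(L)}(z)=\sum_{k\ge 0}T_k(L)$ with $T_k(L)=\check\phi^{(L-k)}(z)\prod_{j=L-k}^{L-1}\check\phi'\big(\check\phi^{(j)}(z)\big)$ and $T_k(L):=0$ for $k>L$. Since $\check\phi^{(j)}(z)\to q^*$ and $\check\phi'$ is continuous, $T_k(L)\to q^*\lambda^k$ for each fixed $k$. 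For domination, note $\check\phi'$ is non-decreasing on $[0,1]$ with maximum $\mu:=\check\phi'(1)<\infty$; choosing $\lambda'\in(\lambda,1)$ and $J$ with $\check\phi'(\check\phi^{(j)}(z))\le\lambda'$ for all $j\ge J$, one gets $T_k(L)\le(\lambda')^k$ whenever $k\le L-J$, and the short transient tail satisfies $\sum_{k>L-J}T_k(L)\le J\,\mu^{J}(\lambda')^{L-J}\to 0$. Dominated convergence for series then yields $\lim_{L\to\infty}K^{(L)}(z)=\sum_{k\ge 0}q^*\lambda^k=\frac{q^*}{1-\lambda}=:C_1>0$, which is \emph{independent} of $z\in[0,1)$. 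On the diagonal, $\check\phi^{(j)}(1)=1$ gives $K^{(L)}(1)=\sum_{k=0}^{L}\mu^k\to\infty$ since $\mu>1$.

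Finally, for any $x,\tilde x\in\mathcal S_+^d$ with $x\neq\tilde x$ we have $x^T\tilde x\in[0,1)$, so the above shows $\lim_{L\to\infty}K^{(L)}(x,\tilde x)/C(L)=C_1>0$ and $\lim_{L\to\infty}K^{(L)}(x,x)/C(L)=\infty\neq C_1$ with $C(L)\equiv 1$; Proposition~\ref{prop: NTK Majority Vote Condition} (its $C_2=\infty$ branch) then gives $m_n(x)=\sign\big(\sum_{i=1}^n y^{(i)}\big)$, i.e.\ $m_n$ is the majority vote classifier. I expect the main obstacle to be the bookkeeping in the interchange of limit and infinite sum for $K^{(L)}(z)$ with $z<1$ — in particular the split into the ``stable'' range $k\le L-J$ (dominated by a convergent geometric series) and the short transient range $k>L-J$ (bounded crudely by $\mu^{J}(\lambda')^{L-J}$) — together with the verification that the interior fixed point $q^*$ is simple, so that $\lambda<1$ strictly and the geometric series defining $C_1$ converges.
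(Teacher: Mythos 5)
Your proof is correct and follows essentially the same route as the paper's: establish a unique attracting fixed point $c\in(0,1)$ of $\check\phi$ with $\check\phi'(c)<1$, deduce that for $z<1$ the kernel $K^{(L)}(z)$ converges to the constant $c/(1-\check\phi'(c))$, observe that $K^{(L)}(1)\to\infty$, and invoke Proposition~\ref{prop: NTK Majority Vote Condition} with $C(L)\equiv 1$. The one substantive difference is how you obtain the off-diagonal limit: the paper passes to the limit in the recursion~\eqref{eq: NTK Recursive Formula} and solves the resulting fixed-point equation $K^*=K^*\check\phi'(c)+c$, which presupposes that $\lim_{L\to\infty}K^{(L)}(z)$ exists; you instead expand the closed form~\eqref{eq: Closed form for NTK}, reindex by $k=L-i$, and justify $\sum_k T_k(L)\to\sum_k q^*\lambda^k$ via a dominated-convergence split into a stable geometric tail and a vanishing transient. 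Your argument is the more complete one, since it proves existence of the limit rather than assuming it; it also spells out the use of $\check\phi'(1)>1$ in locating the interior fixed point (the paper's appeal to the intermediate value theorem from $\check\phi(0)<1$ and $\check\phi(1)=1$ alone is a little too terse, as it implicitly uses $h'(1)>0$). The two approaches to showing $\check\phi'(c)<1$ (your convexity argument vs.\ the paper's contradiction via analytic continuation) are essentially equivalent given that $\check\phi$ is a power series with nonnegative coefficients.
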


\begin{proof}
We show that the limiting kernel satisfies the properties of Proposition \ref{prop: NTK Majority Vote Condition} with $C_2 = \infty$.  Note that we must have $\check{\phi}'(0) < 1$ by Lemma \ref{lemma: 1st derivative of dual values}.  Now, since $\check{\phi}(0) < 1$ and $\check{\phi}(1) = 1$, by the intermediate value theorem, there exists some $c \in (0, 1)$ such that $\check{\phi}(c) = c$.  

We claim that $\check{\phi}'(c) < 1$.  Suppose for the sake of contradiction that $\check{\phi}'(c) \geq 1$.  Then, since $\check{\phi}(z)$ can be written as a convergent power series with non-negative coefficients, we have that $\check{\phi}(z) \geq z$ for $z \in (c, 1]$.  Hence either $\check{\phi}(z) = z$ on some subset of $(c, 1]$ or $\check{\phi}(z) > z$ for $z \in (c, 1]$.  In the former case, analytic continuation implies that $\check{\phi}(z) = z$ on $[0, 1]$, and in the latter case, $\check{\phi}(1) > 1$.   Thus, in either case we reach a contradiction and thus we can conclude that $\check{\phi}'(c) < 1$.  Therefore,it follows that $c$ is the unique fixed point attractor of $\check{\phi}(z)$.  

Lastly, since $c \in (0, 1)$, we can conclude that the infinite depth NTK solves the equilibrium equation corresponding to the recursive formula for the NTK in Eq.~\eqref{eq: NTK Recursive Formula}. Namely, for any $z \in (0, 1)$ and $K^{*}(z) := \lim_{L \to \infty} K^{(L)}(z)$: 
\begin{align*}
    K^*(z) = K^*(z) \check{\phi}'(c) + c \implies K^*(z) = \frac{c}{1 - \check{\phi}'(c)}.
\end{align*}
Hence, for any $z \in (0, 1)$, it holds that $\lim_{L \to \infty} K^{(L)}(z) = \frac{c}{1 - \check{\phi}'(c)}$.  Lastly, letting $a = \check{\phi}'(1)$, for $z = 1$, we have that
\begin{align*}
    K^{(L)}(1) = \frac{a^L - 1}{a - 1}, 
\end{align*}
and so $\lim_{L \to \infty} K^{(L)}(1) = \infty$.  Thus, $\lim_{L \to \infty}K^{(L)}(x, \tilde{x})$ satisfies the conditions of Proposition \ref{prop: NTK Majority Vote Condition}, which concludes the proof of the lemma.  
\end{proof}

We next show that if $\check{\phi}$ falls under case 3 
with $\check{\phi}'(1) < 1$, then under ridge regularization, the corresponding infinitely wide and deep classifier also implements majority vote classification. 

\begin{lemma}
\label{lemma: ordered phase majority vote}
Let $m_{n, \lambda}^{(L)}$ denote the ridge-regularized kernel machine with regularization term $\lambda$ and with the NTK of a fully connected network with $L$ hidden layers and activation function $\phi$ on $n$ training points.  If $\check{\phi}$ satisfies: 
\begin{enumerate}
    \item[1)] $\check{\phi}(0) > 0$, $\check{\phi}(1) = 1$,
    \item[2)] $\check{\phi}'(1) < 1$,
\end{enumerate}
then $\lim\limits_{\lambda \to 0^+} \lim\limits_{L \to \infty} m_{n, \lambda}^{(L)}(x)$ is the majority vote classifier. 
\end{lemma}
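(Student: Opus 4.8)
The plan is to show that, in the ordered phase, the infinite-depth NTK degenerates to a \emph{constant} positive kernel on $\mathcal{S}_+^{d}$, so that the unregularized Gram matrix collapses to a rank-one (hence singular) matrix; this is exactly why Proposition~\ref{prop: NTK Majority Vote Condition} does not apply directly here (its second condition in Eq.~\eqref{eq: NTK Majority vote condition 1} fails, since the off-diagonal and diagonal limits coincide) and why ridge regularization is required. Throughout I write $m_{n,\lambda}^{(L)}(x)=\sign\!\big(y\,(K_n^{(L)}+\lambda I)^{-1}K^{(L)}(X,x)\big)$, and I set $z=x^\top\tilde x\in[0,1]$ and $c^\star:=\tfrac{1}{1-\check{\phi}'(1)}\in(0,\infty)$, which is well-defined and positive because $0\le\check{\phi}'(1)<1$.

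First I would pin down the limiting behavior of the iterated dual activation and of the NTK. By Eq.~\eqref{eq: Analytic formula dual activation}, $\check{\phi}(z)=\sum_{i\ge 0}a_i z^i$ with $a_i\ge 0$, so $\check{\phi}$ is convex and increasing on $[0,1]$. The tangent-line bound at $z=1$ together with $\check{\phi}(1)=1$ and $\check{\phi}'(1)<1$ gives $\check{\phi}(z)\ge 1-\check{\phi}'(1)(1-z)>z$ for all $z\in[0,1)$, while $\check{\phi}(z)\le 1$ by monotonicity. Hence for any $z\in[0,1]$ the sequence $\check{\phi}^{(L)}(z)$ is nondecreasing and bounded above by $1$, so it converges to a fixed point of $\check{\phi}$ in $[0,1]$; the only such fixed point is $1$ (since $\check{\phi}(z)>z$ for $z<1$), so $\check{\phi}^{(L)}(z)\to 1$. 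Substituting into the recursion $K^{(L)}(z)=\check{\phi}^{(L)}(z)+K^{(L-1)}(z)\,\check{\phi}'\!\big(\check{\phi}^{(L-1)}(z)\big)$ from Eq.~\eqref{eq: NTK Recursive Formula}, and using that $\check{\phi}'$ is continuous and increasing so that $\check{\phi}'(\check{\phi}^{(L-1)}(z))\to\check{\phi}'(1)<1$, a standard $\limsup/\liminf$ argument for the asymptotically contractive linear recursion $K^{(L)}=a_L+r_LK^{(L-1)}$ with $a_L\to1$, $r_L\to\check{\phi}'(1)<1$ yields
\[
\lim_{L\to\infty}K^{(L)}(z)=c^\star\qquad\text{for every }z\in[0,1],
\]
including the diagonal, where the recursion reduces exactly to $K^{(L)}(1)=\sum_{j=0}^{L}\check{\phi}'(1)^{\,j}\to c^\star$.

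Next I would evaluate the double limit. Fix $\lambda>0$. Since $K^{(L)}(x^{(i)},x^{(j)})\to c^\star$ for all $i,j$ and $K^{(L)}(x^{(i)},x)\to c^\star$, we get $K_n^{(L)}+\lambda I\to c^\star\mathbf 1\mathbf 1^\top+\lambda I$ and $K^{(L)}(X,x)\to c^\star\mathbf 1$; as matrix inversion is continuous on invertible matrices and $c^\star\mathbf 1\mathbf 1^\top+\lambda I$ is invertible, $(K_n^{(L)}+\lambda I)^{-1}\to(c^\star\mathbf 1\mathbf 1^\top+\lambda I)^{-1}$. Sherman--Morrison gives $(c^\star\mathbf 1\mathbf 1^\top+\lambda I)^{-1}c^\star\mathbf 1=\tfrac{c^\star}{\lambda+c^\star n}\mathbf 1$, so
\[
\lim_{L\to\infty}\;y\,(K_n^{(L)}+\lambda I)^{-1}K^{(L)}(X,x)=\frac{c^\star}{\lambda+c^\star n}\sum_{i=1}^{n}y^{(i)}.
\]
Assuming $\sum_i y^{(i)}\neq 0$ (the generic case), this limit is nonzero, so $\sign(\cdot)$ commutes with the limit, and since $\tfrac{c^\star}{\lambda+c^\star n}>0$ we obtain $\lim_{L\to\infty}m_{n,\lambda}^{(L)}(x)=\sign\!\big(\sum_i y^{(i)}\big)$ for \emph{every} $\lambda>0$; the outer limit $\lambda\to0^{+}$ is then trivial, giving the majority vote classifier.

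The main obstacle is the rigorous proof that $K^{(L)}(z)\to c^\star$; but only the finitely many values $z=x^\top x^{(i)}$ and $z=1$ are needed, so pointwise convergence suffices, and the contraction $\check{\phi}'(1)<1$ in fact makes this convergence geometric. The only other delicate point is interchanging $\lim_{L\to\infty}$ with $\sign(\cdot)$, which is legitimate precisely because $\lambda>0$ keeps the Gram matrix invertible and the limiting argument $\tfrac{c^\star}{\lambda+c^\star n}\sum_i y^{(i)}$ is bounded away from $0$ whenever $\sum_i y^{(i)}\neq 0$.
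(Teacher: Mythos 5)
Your proof is correct and follows essentially the same route as the paper's: establish that $K^{(L)}(z)\to c^\star=\tfrac{1}{1-\check{\phi}'(1)}$ uniformly over the finitely many relevant inner products, apply the Sherman--Morrison formula to the regularized Gram matrix limit $c^\star\mathbf{1}\mathbf{1}^\top+\lambda I$, and observe the resulting argument of $\sign$ is a positive multiple of $\sum_i y^{(i)}$ independent of $\lambda$. You fill in more detail than the paper on why $\check{\phi}^{(L)}(z)\to 1$ and why the NTK recursion converges, but the decomposition and key computation are the same.
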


\begin{proof}
The proof follows that of Proposition \ref{prop: NTK Majority Vote Condition}.  Since $\check{\phi}'(1) < 1$, $z = 1$ is the unique fixed point attractor of $\check{\phi}$. Then as in the proof of Lemma \ref{lemma: chaotic phase majority vote}, for all $x, \tilde{x} \in \mathcal{S}_+^d$, it holds that
$$\lim\limits_{L \to \infty} K^{(L)}(x, \tilde{x}) = \frac{1}{1 - \check{\phi}'(1)}.$$  
Letting $c = \frac{1}{1 - \check{\phi}'(1)}$, we obtain that 
\begin{align*}
    \lim_{L \to \infty} m_{n, \lambda}^{(L)}(x) &= \sign \left(y (K_n^{(L)} + \lambda I)^{-1} K^{(L)}(X, x) \right) \\
    &= \sign \left( y \left[\lim_{L \to \infty} (K_n^{(L)} + \lambda I)^{-1} \right] \left[ \lim_{L \to \infty} K^{(L)}(X, x) \right]  \right) \\
    &= \sign \left( y \left[ \frac{1}{\lambda}I - \frac{c}{\lambda (\lambda + cn)} J \right]  c\mathbf{1}  \right) \\
    &= \sign \left( \frac{c}{\lambda + cn} \sum_{i=1}^{n} y^{(i)}  \right),
\end{align*}
where $J \in \mathbb{R}^{n \times n}$ is the all-ones matrix, $\mathbf{1} \in \mathbb{R}^n$ is the all-ones vector, and the third equality follows from the Sherman-Morrison formula. Hence, Proposition \ref{prop: NTK Majority Vote Condition} implies that $\lim\limits_{\lambda \to 0^+} \lim\limits_{L \to \infty} m_{n, \lambda}^{(L)}(x)$ is again the majority vote classifier, thereby completing the proof.
\end{proof}

\end{document}